\newtheorem{proposition}{Proposition}
\newtheorem*{corollary}{Corollary}
\newtheorem*{proposition*}{Proposition}
\renewcommand{\@algocf@capt@plain}{above}
\numberwithin{equation}{section}
\newcommand{\jobResumeSimCoef}{0.003}
\newcommand{\jobResumeSimDiff}{0.51}
\newcommand{\jobResumeSimPval}{$<0.001$}
\newcommand{\yrsExpCoef}{-0.53}
\newcommand{\yrsExpDiff}{-6.1}
\newcommand{\yrsExpPval}{$<0.001$}
\newcommand{\topschoolCoef}{0.013}
\newcommand{\topschoolDiff}{4.66}
\newcommand{\topschoolPval}{$<0.001$}
\newcommand{\bachelorsCoef}{0.015}
\newcommand{\bachelorsDiff}{1.79}
\newcommand{\bachelorsPval}{$<0.001$}
\newcommand{\mastersCoef}{0.03}
\newcommand{\mastersDiff}{7.16}
\newcommand{\mastersPval}{$<0.001$}
\newcommand{\doctorateCoef}{-0.004}
\newcommand{\doctorateDiff}{-6.98}
\newcommand{\doctoratePval}{$<0.001$}
\begin{document}

\begin{titlepage}
    \title{Algorithmic Hiring and Diversity: Reducing Human-Algorithm Similarity for Better Outcomes}

    \author{Prasanna Parasurama \\ \small Emory University
    \and 
    Panos Ipeirotis \\ \small New York University}
    \date{\today
    }
    \maketitle

    \begin{abstract}
        \noindent
    Algorithmic tools are increasingly used in hiring to improve fairness and diversity, often by enforcing constraints such as gender-balanced candidate shortlists. 
    However, we show theoretically and empirically that enforcing equal representation at the shortlist stage does not necessarily translate into more diverse final hires, even when there is no gender bias in the hiring stage. 
    We identify a crucial factor influencing this outcome: the correlation between the algorithm's screening criteria and the human hiring manager's evaluation criteria---higher correlation leads to lower diversity in final hires. 
    Using a large-scale empirical analysis of nearly 800,000 job applications across multiple technology firms, we find that enforcing equal shortlists yields limited improvements in hire diversity when the algorithmic screening closely mirrors the hiring manager's preferences. 
    We propose a complementary algorithmic approach designed explicitly to diversify shortlists by selecting candidates likely to be overlooked by managers, yet still competitive according to their evaluation criteria. 
    Empirical simulations show that this approach significantly enhances gender diversity in final hires without substantially compromising hire quality. 
    These findings highlight the importance of algorithmic design choices in achieving organizational diversity goals and provide actionable guidance for practitioners implementing fairness-oriented hiring algorithms.

        \bigskip
        \setcounter{page}{0}
        \thispagestyle{empty}

    \end{abstract}

\end{titlepage}

\pagebreak \newpage
\doublespacing
\section{Introduction}

In recent years, organizations have widely adopted various policies to increase workforce diversity~\parencite{shi_adoption_2018}. 
A popular policy is to diversify candidate shortlists or interview pools---often referred to as \textit{soft} affirmative action policies.
Unlike \textit{hard} affirmative action policies such as hiring quotas, which are explicitly prohibited by US employment law, soft policies aim only to increase minority representation in the initial interview stage without imposing quotas on final hires (Civil Rights Act of 1974; \cite{schuck_affirmative_2002}). 
Prominent examples of soft affirmative action policies include the NFL's Rooney Rule~\parencite{rooney_rule_nfl}, which requires interviewing at least one ethnic minority candidate for head coaching positions, and similar policies adopted by major tech firms such as \textcite{facebook_diversity}, ~\textcite{pinterest_diversity}, and ~\textcite{patreon_diversity}.

With the increasing use of algorithms in hiring, these diversity policies are frequently implemented as algorithmic fairness constraints. 
For instance, \emph{LinkedIn Recruiter} has deployed fairness-aware ranking algorithms aimed at improving gender diversity among candidates presented to recruiters~\parencite{geyik_fairness-aware_2019}. 
However, presenting more diverse candidate sets does not necessarily translate into greater diversity in hiring outcomes. 
LinkedIn's own analyses highlight uncertainty regarding whether improved gender representation in candidate recommendations leads to measurable improvements in outcomes, such as candidate contacts or interview requests~\parencite{geyik_fairness-aware_2019}.
Ultimately, algorithmic recommendations are integrated with human decisions, and the final hiring outcomes depend on human managers or recruiters.

Previous laboratory studies have indicated that the effectiveness of fairness constraints can vary significantly across job types~\parencite{suhr_does_2021,peng_what_2019}.
When these policies fail, conventional wisdom typically attributes their ineffectiveness to human biases. 
While human biases undoubtedly affect outcomes, other important factors influencing fairness constraints' effectiveness remain underexplored.

To systematically explore these factors, we propose and analyze a two-stage hiring model comprising algorithmic screening followed by human hiring decisions. 
The model considers a hiring scenario with a higher number of male applicants than female applicants, reflecting conditions typical in firms using diversity policies.
In the first stage, a screening algorithm shortlists candidates and applies an \textit{equal selection} constraint, such that an equal number of men and women are shortlisted. 
A hiring manager then evaluates the shortlisted candidates and hires the best candidates based on her own assessments.

Analytically solving this model reveals a crucial insight: the effectiveness of the equal selection constraint diminishes as the correlation between the screening algorithm's evaluation criteria and the hiring manager's evaluation criteria increases.
Moreover, the expected quality of hires also decreases as the correlation increases.
In other words, the better the screening algorithm matches the manager's preferences, the lower the expected quality of hires \emph{and} the less effective the equal selection constraint becomes. 
Based on this insight, we propose a complementary algorithm designed explicitly to select candidates likely to be overlooked by hiring managers yet still be competitive according to their evaluation criteria.

We empirically validate our theoretical predictions on hire diversity using extensive hiring data from eight technology firms, including nearly 800,000 applicants and over 3,600 job postings. 
Through counterfactual simulations, we demonstrate two key findings:
\begin{enumerate}
\item Consistent with our theoretical predictions, enforcing equal selection constraints in the shortlist does not consistently improve hiring diversity and may have negligible effects in some scenarios.
\item The constraint's effectiveness varies substantially across job types, driven primarily by differences in the correlation between algorithmic screening and managerial assessment criteria.
\end{enumerate}
Furthermore, when we benchmark our complementary algorithm against other traditional fairness constraints, we find it substantially more effective in improving workforce diversity without significant trade-offs in candidate quality.

\textbf{Our Contributions.} We theoretically and empirically show that the equal selection constraint, a common algorithmic fairness constraint, fails to increase workforce diversity when algorithmic screening evaluations correlate strongly with human hiring evaluations. 
To address this, we introduce and validate a complementary screening algorithm designed specifically to reduce these correlations, significantly improving hire diversity outcomes with minimal loss in candidate quality across various hiring contexts.
Our study contributes to the literature on algorithmic fairness in hiring pipelines in two key ways. 
First, we provide a theoretical characterization of when equal shortlist constraints effectively enhance diversity, emphasizing the critical role of correlation between screening and hiring evaluations. 
Second, we empirically validate this theoretical insight and introduce a complementary algorithmic design that significantly improves diversity outcomes in practice.

The remainder of the paper proceeds as follows. \Cref{sec:relatedwork} reviews related literature. \Cref{sec:model} describes the theoretical model, discusses our findings, and introduces our complementary algorithmic design. \Cref{sec:empirical_modeling} outlines the empirical approach and data. \Cref{sec:empirical_results} presents our empirical findings, and \Cref{sec:discussion_conclusion} concludes with implications for practice and future research directions.

\section{Related Work}
\label{sec:relatedwork}
This paper is related to the algorithmic fairness literature, which studies the design and evaluation of algorithms aimed to mitigate bias and improve fairness in algorithmic decision-making \parencite{dwork_fairness_2012, zemel_learning_2013, hardt_equality_2016,zafar_fairness_2017,zafar_parity_2017,geyik_fairness-aware_2019,blum_multi_2022}.
In this literature, two broad notions of fairness exist: \emph{individual fairness}, which requires that similar individuals are treated similarly by the algorithm; and \emph{group fairness}, which requires that some statistic of interest is on average equal across groups along the lines of \textit{protected attributes}.\footnote{
    Protected attributes are attributes that are protected under the law against discrimination. U.S. federal law prohibits employment discrimination based on race, gender, religion, national origin, age, disability, sexual orientation, and pregnancy.
}
Within group fairness, different definitions of fairness exist, such as demographic (or statistical) parity, equal selection, equal false-positive rates, equal false-negative rates, equal odds, equal accuracy rates, and equal positive predictive values across groups~(see \Cref{tab:screening_algos} for precise definitions and \textcite{mitchell_algorithmic_2021} for a review).
Except in trivial cases, it is impossible to simultaneously satisfy all fairness criteria  \parencite{chouldechova_fair_2017,kleinberg_inherent_2016}, so the choice of fairness criteria depends on the context and is often informed by laws, policies, and desired outcomes.

Fairness constraints are not only used to mitigate any potential bias in the algorithm but can also be used as a tool to inscribe diversity policies that proactively correct for pre-existing societal and systemic bias.
For example, in the hiring context, prior studies have shown that women are deterred from applying to male-dominated jobs because they anticipate discrimination in the hiring process \parencite{storvik_search_2008,brands_leaning_2017,bapna_rejection_2021}.
To address such pre-existing disparities, firms have adopted hiring diversity policies that increase or equalize the representation of minorities in the shortlist \parencite{shi_adoption_2018}.\footnote{
    For example, the diversity hiring policies implemented in high-tech firms such as \href{https://www.bloomberg.com/news/articles/2017-01-09/facebook-s-hiring-process-hinders-its-effort-to-create-a-diverse-workforce}{Facebook}, \href{https://newsroom.pinterest.com/en/post/our-plan-for-a-more-diverse-pinterest}{Pinterest}, \href{https://www.slideshare.net/TarynArnold/patreon-culture-deck-april-2017}{Patreon}, and \textit{LinkedIn Recruiter's} ranking algorithm \parencite{geyik_fairness-aware_2019} all seek to increase the representation of minorities in the shortlist.
}
As hiring becomes increasingly aided by algorithms, these diversity policies are implemented as algorithmic fairness constraints.
Of particular interest is the \emph{equal selection} fairness constraint \parencite{khalili_fair_2021,jiang_fair_2023}, which requires positive outcomes to be equal across groups regardless of the proportions in the baseline population.
For example, in algorithmic hiring, an equal selection constraint might require that the screening algorithm shortlists an equal number of men and women, regardless of the proportion of women in the applicant pool.\footnote{
    This is in contrast to \emph{demographic parity}, another common fairness constraint in the algorithmic hiring setting, which requires the proportion of positive outcomes across groups to be equal to the proportions in a baseline population \parencite{raghavan_mitigating_2020}.
    For example, in algorithmic screening, demographic parity may require that the proportion of women on the shortlist be equal to the proportion of women in the applicant pool.
    Whereas demographic parity ensures that bias is not introduced in the hiring process, it does not correct for pre-existing disparities.}

Although these constraints guarantee fairness on algorithmic outputs, when these outputs are used as inputs in downstream decisions, the overall effects of these constraints in either mitigating bias or increasing diversity are not guaranteed.
An emerging line of literature studies the efficacy of algorithmic fairness constraints in ``pipelines''---i.e., settings where decisions are made sequentially.
\textcite{bower_fair_2017} analyze the equal opportunity constraint in a pipeline setting and shows that individually fair algorithms, when assembled sequentially, do not necessarily guarantee fair final outcomes with respect to equal opportunity.
Similarly, \textcite{dwork_fairness_2019} analyze the individual fairness constraint and conditional parity constraints in composition settings and show that individually fair algorithms, when composed together, do not necessarily guarantee fair final outcomes.
\textcite{blum_multi_2022} propose a fair algorithm that satisfies the equality of opportunity constraint across the entire selection pipeline.
Our main contribution to this algorithmic fairness and fair pipelines literature is that we study the \emph{equal selection} constraint in a hiring pipeline setting, where decisions are made sequentially.
We propose an algorithmic design to increase the effectiveness of the equal selection constraint and demonstrate its effectiveness using empirical hiring data.

Outside the algorithmic fairness literature, our work is also related to a number of theoretical papers that study bias and fairness in hiring settings.
\textcite{kleinberg_selection_2018} provide a theoretical hiring model in the presence of implicit bias and show that the Rooney Rule can increase the proportion of minority hires while also increasing the payoff of the decision-maker (see also \textcite{celis_effect_2021}).
\textcite{fershtman_soft_2021} present a model to study the effect of ``soft'' affirmative action policies that increase the proportion of minority candidates in the candidate pool.
\textcite{lee_diversity_2021} study a 2-stage hiring setting with agents with different levels of interest in diversity and show that this difference can lower the likelihood of highly qualified candidates being hired even when they enhance diversity.
Our contribution to this theoretical hiring literature is that we explicitly model the correlation in assessment criteria between the screener and the hiring manager, which we show to be a key determinant of the effectiveness of a common diversity policy.

\section{Theoretical Framework and Implications}
\label{sec:model}

\subsection{Model Setup}

Consider a hiring context with $n_a$ applicants, each characterized by their group membership $g \in \{m, f\}$, where $m$ represents the majority group (male) and $f$ the minority group (female). The female proportion among applicants is $p_a < 0.5$. Each candidate also has an unobservable true quality $Q$, which is measurable only post-hire (e.g., via job performance).

\noindent \textbf{Two-Stage Hiring Process.} The hiring involves two sequential stages:
\begin{enumerate}
\item \textit{Algorithmic Screening}: An algorithm assigns each candidate a screening score $Q^S$ and shortlists candidates exceeding a threshold. To enhance diversity, the algorithm implements an \textit{equal selection} constraint, shortlisting an equal number of male and female candidates by setting gender-specific thresholds ($\tau^S_m, \tau^S_f$). Let $p_s$ be the proportion of women in the shortlist.
\item \textit{Human Evaluation}: The shortlisted candidates are evaluated by a hiring manager who assigns a score ($Q^H$) and hires those exceeding a common threshold ($\tau^H$), independent of gender.\footnote{
Indeed implementing a constraint on the hiring manager to hire an equal number of men and women would trivially increase the gender diversity of hires; however, such a constraint on the hiring manager would be considered a hiring quota, which is prohibited under US Employment Law (Title VII, Civil Rights Act of 1974).
This is the reason many diversity-focused hiring policies (e.g., Rooney Rule, Facebook's hiring policy \parencite{huet_facebooks_2017}, LinkedIn's screening algorithm \parencite{geyik_fairness-aware_2019}) target the initial screening decision rather than the final hiring decision.
} Let $p_h$ be the proportion of women in the hired pool.
\end{enumerate}

Table~\ref{tab:hiring_selection_rule}  summarizes these selection rules.

\begin{table}[t]
    \centering
    \caption{Stages of the hiring model}
    \label{tab:hiring_selection_rule}
    \begin{threeparttable}
                \begin{tabular}{ccc}
            \toprule
            Stage & Constraint                & Selection Rule                 \\
            \midrule
            (1)   & \makecell{Equal Selection                                  \\ \vspace{2mm} $\mathds{P}(g=f \mid y^S=1)=\mathds{P}(g=m \mid y^S=1)$}       &
            \begin{minipage}{0.4\textwidth}
                \[
                    y^S =
                    \begin{cases}
                        1 & \text{if } Q^S > \tau^S_m, g=m \\
                        1 & \text{if } Q^S > \tau^S_f, g=f \\
                        0 & \text{otherwise}
                    \end{cases}
                \]
            \end{minipage}
            \\
            \midrule
            (2)   & None                      & \begin{minipage}{0.4\textwidth}
                                                    \[
                                                        y^H =
                                                        \begin{cases}
                        1 & \text{if } Q^H > \tau^H \\
                        0 & \text{otherwise}
                    \end{cases}
                                                    \]
                                                \end{minipage} \\
            \bottomrule
        \end{tabular}
        \begin{tablenotes}
            \footnotesize
            \item \emph{Notes:} $y^S$ and $y^H$ are binary indicators of selection in the screening and hiring stages, respectively.
            Gender-specific thresholds $\tau^S_m$ and $\tau^S_f$ ensure equal selection, while $\tau^H$ is gender-neutral.
        \end{tablenotes}
    \end{threeparttable}

\end{table}

\begin{table}[t]
    \centering
    \caption{Model parameters, definitions, and assumptions}
    \label{tab:parameters}
    \renewcommand{\arraystretch}{1.5} 
    
    \begin{tabular}{m{0.15\textwidth}m{0.5\textwidth}m{0.3\textwidth}}
        \toprule
        \textbf{Parameter}                          & \textbf{Definition}                                                                                                                   & \textbf{Assumption}                                                \\
        \hline
        $\theta^S$                                  & Correlation between $Q$ and $Q^S$; measure of how good the screening algorithm is at predicting true quality                          & $\theta^S \in [0,1)$                                 \\
        \hline
        $\theta^H$                                  & Correlation between $Q$ and $Q^H$; measure of how good the hiring manager is at predicting true quality                               & $\theta^H \in [0,1)$                                               \\
        \hline
        $\theta$                                    & Correlation between $Q^S$ and $Q^H$; degree to which the screening algorithm and the hiring manager agree in their quality assessment & $\theta \in [0,1)$                                                 \\

        \hline
        $\tau^S$                                    & Quality cutoff for the screener to pass the candidate to the next round ($Q_S \geq \tau_S$) & --- \\

       \hline
        $\tau^H$                                    & Quality cutoff for the hiring manager to hire a candidate ($Q_H \geq \tau_H$) & --- \\
        
        \hline
        $\delta \coloneq \theta_m - \theta_f$       & Gender difference in correlation between the screener and the hiring manager                                                  & $\delta = 0$ (for now)    \\
        \hline
        $\delta^S \coloneq \theta^S_m - \theta^S_f$ & Predictive gender bias of screening scores                                                                                            & $\delta^S = 0$                                                     \\
        \hline
        $\delta^H \coloneq \theta^H_m - \theta^H_f$ & Predictive gender bias of the hiring manager scores                                                                                   & $\delta^H = 0$                                                     \\
        \hline
        $\alpha$                                    & Mean quality difference between men and women; positive $\alpha$ implies women have higher mean quality than men                      & $\alpha = 0$. We extend the model in \Cref{apx:model_diff_in_qual} \\
        \hline
        $\beta^S$                                   & Systematic gender bias of screening scores                                                                                            & $\beta^S = 0$                                                      \\
        \hline
        $\beta^H$                                   & Systematic gender bias of hiring manager scores                                                                                       & $\beta^H = 0$                                                      \\
        \bottomrule
    \end{tabular}

\end{table}

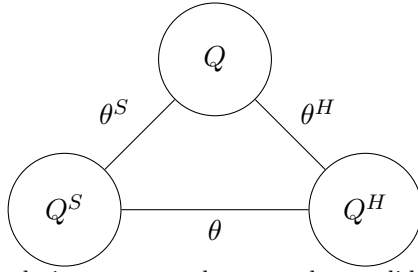
\begin{figure}[t]
    \centering
    \caption{The correlation structure between $Q, Q^S, Q^H$}
    \label{fig:Q}

    \begin{tikzpicture}
        \node[circle,draw, minimum size = 1.5cm] (A) at (0,0) {$Q^S$};
        \node[circle,draw, minimum size = 1.5cm] (B) at (2,2) {$Q$};
        \node[circle,draw, minimum size = 1.5cm] (C) at (4,0) {$Q^H$};

        \draw (A) -- node[above left]{$\theta^S$} (B);
        \draw (B) -- node[above right]{$\theta^H$} (C);
        \draw (C) -- node[below]{$\theta$} (A);
    \end{tikzpicture}

    \begin{minipage}{1\textwidth}
            \begin{footnotesize}
                \begin{singlespace}
                    \emph{Notes:}  
                    This figure illustrated the correlation structure between the candidates’ true quality ($Q$), the algorithm’s
quality estimate ($Q^S$ ), and the hiring manager’s quality estimate ($Q^H$). The $\theta$ values represent the correlations between these scores.
                \end{singlespace}
            \end{footnotesize}
        \end{minipage}
\end{figure}

\noindent \textbf{Quality Scores Model.} We model the scores $(Q, Q^S, Q^H)$ as following a multivariate Gaussian distribution, potentially with distinct distributions for male and female candidates:
\begin{align}
(Q_m, Q_m^S, Q_m^H) &\sim \mathcal{N} \left( \begin{bmatrix} 0 & 0 & 0 \end{bmatrix}, \begin{bmatrix}
1          & \theta^S & \theta^H \\
\theta^S & 1          & \theta   \\
\theta^H & \theta   & 1
\end{bmatrix} \right) \\
(Q_f, Q_f^S, Q_f^H) &\sim \mathcal{N} \left( \begin{bmatrix} \alpha & \alpha + \beta^S & \alpha + \beta^H \end{bmatrix}, \begin{bmatrix}
1          & \theta^S - \delta^S & \theta^H - \delta^H \\
\theta^S - \delta^S & 1          & \theta - \delta   \\
\theta^H - \delta^H & \theta - \delta   & 1
\end{bmatrix} \right)
\end{align}
Without loss of generality, male candidates have a mean vector of zero, and female candidates may differ by parameters $(\alpha, \beta^S, \beta^H)$.
The correlation structure is defined by a positive semi-definite covariance matrix with parameters $(\theta, \theta^S, \theta^H)$.
The correlation structure may differ by gender, where the difference is parameterized by $(\delta, \delta^S, \delta^H)$.
For now, we assume that $(\delta, \delta^S, \delta^H)=0$.
\Cref{tab:parameters} summarizes all model parameters and assumptions, and Figure~\ref{fig:Q} provides a visual representation of the quality score model.

\subsection{Theoretical results}
\label{sec:theoretical_results}

We analyze how the gender diversity of hires, $p_h$, and the expected quality of hires, $E[Q_{h}]$, vary as functions of the firm's design parameters with respect to the screening algorithm---$\theta, \delta, \theta^S$.\footnote{ 
Not all model parameters are design parameters that can be controlled by the firm.
    For a given candidate, $Q$ is fixed, and estimation of $Q^H$ is delegated to the hiring manager, which fixes $\theta^H$.
        The firm has control over the screening algorithm, and thus how $Q^S$ is estimated.
    Therefore, the design parameters that the firm can control are $\theta^S$ (i.e., how good the screening algorithm is in predicting true quality), and $\theta$ (how similar the screening algorithm is compared to the hiring manager in assessing quality).}
\subsubsection{Effects on hire diversity}

\newcommand\cmdphVsTheta{The effectiveness of the equal selection constraint ($p_h$) decreases as the correlation ($\theta$) between algorithmic scores and hiring manager scores increases.}
\begin{mdframed}
    \begin{proposition}
        \label{prop:ph_vs_theta}
        \cmdphVsTheta{}
    \end{proposition}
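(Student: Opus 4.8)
The plan is to reduce the proposition to a single monotonicity statement about bivariate Gaussian orthant probabilities and then to sign a derivative in $\theta$. Under the baseline assumptions $(\alpha,\beta^S,\beta^H,\delta,\delta^S,\delta^H)=0$, the pair $(Q^S,Q^H)$ is \emph{identically} distributed across genders --- bivariate normal with unit marginals and correlation $\theta$ --- so the only gender asymmetry lives in the thresholds. Writing $q_g=\bar\Phi(\tau^S_g)=\mathbb{P}(Q^S>\tau^S_g)$ for the screening pass rates, the equal-selection constraint equates the shortlisted counts, $(1-p_a)q_m=p_a q_f$; since $p_a<1/2$ this forces $q_m<q_f$ and hence $\tau^S_m>\tau^S_f$. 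The key structural fact is that the marginal of $Q^S$ is standard normal for \emph{every} $\theta$, so both thresholds are pinned down independently of $\theta$. Letting $F(\tau)\coloneqq\mathbb{P}(Q^S>\tau,\,Q^H>\tau^H)$ and $g(\tau)\coloneqq F(\tau)/\bar\Phi(\tau)=\mathbb{P}(Q^H>\tau^H\mid Q^S>\tau)$ (the ``pass-through'' rate), I would count hires in each group and use the equal-count identity to cancel $p_a q_f$, collapsing the female share of hires to the clean form
\[
p_h=\frac{g(\tau^S_f)}{g(\tau^S_f)+g(\tau^S_m)}.
\]

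From this expression, $p_h$ is decreasing in $\theta$ if and only if the ratio $g(\tau^S_m)/g(\tau^S_f)$ is increasing in $\theta$, i.e.\ the male pass-through grows faster than the female one. Since $\bar\Phi(\tau)$ is independent of $\theta$, one has $\partial_\theta\log g(\tau)=\partial_\theta\log F(\tau)$, and by the classical identity $\partial_\theta F(\tau)=f_2(\tau,\tau^H;\theta)$ (the derivative of a bivariate normal orthant probability with respect to the correlation equals the joint density at the corner). Defining $h(\tau)\coloneqq\partial_\theta\log F(\tau)=f_2(\tau,\tau^H;\theta)/F(\tau)$, the entire proposition reduces to the inequality $h(\tau^S_m)>h(\tau^S_f)$: the elasticity of the joint survival probability with respect to $\theta$ must be larger at the higher (male) screening threshold.

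To attack $h(\tau^S_m)>h(\tau^S_f)$ I would factor $f_2(\tau,\tau^H;\theta)=\phi(\tau)\,\phi\!\big((\tau^H-\theta\tau)/\sqrt{1-\theta^2}\big)/\sqrt{1-\theta^2}$ and $F(\tau)=\int_\tau^\infty\phi(s)\,\bar\Phi\!\big((\tau^H-\theta s)/\sqrt{1-\theta^2}\big)\,ds$, so that $h'(\tau)>0$ becomes a comparison between the hazard rate $-\partial_\tau\log F(\tau)$ of the joint survival and the quantity $-\partial_\tau\log f_2(\tau,\tau^H;\theta)=(\tau-\theta\tau^H)/(1-\theta^2)$. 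The driving force is the log-concavity of the Gaussian (equivalently, the monotonicity of the Mills ratio $\bar\Phi/\phi$): conditioning on a higher $Q^S$ shifts $Q^H$ upward in the monotone-likelihood-ratio order, and a larger $\theta$ amplifies this shift, which is exactly what makes the numerator's $\theta$-elasticity increase with the threshold.

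The hard part will be this last step, and it genuinely requires the hiring stage to be selective relative to screening. Globally $h$ need not be monotone: in the non-selective regime where the hiring bar sits below the shortlist bar ($\tau^H<\tau^S_f$), both limits $\theta\to0$ and $\theta\to1$ return $p_h$ to $p_s=1/2$, so $p_h$ is non-monotone and the inequality cannot follow from a crude tail bound. I would therefore carry out the sign analysis under the operative assumption that hiring is at least as selective as screening (so the corner $(\tau,\tau^H)$ lies in the region where the Gaussian hazard dominates $(\tau-\theta\tau^H)/(1-\theta^2)$), using sharp Mills-ratio inequalities such as $x\bar\Phi(x)<\phi(x)$ for $x>0$ to control the relevant terms, and would verify the boundary cases $\theta\to0$ (where $p_h=1/2$) and $\theta\to1$ (where $p_h\to p_a$) as consistency checks on the monotone decrease.
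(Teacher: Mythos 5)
Your reduction is correct and, in substance, it follows the same route as the paper's own proof: the paper likewise compares the relative $\theta$-growth rates of the two groups' hire probabilities, showing $\partial p_h/\partial\theta<0$ is equivalent to $N'(\theta)/N(\theta)<M'(\theta)/M(\theta)$. Your cancellation $p_h=g(\tau^S_f)/\bigl(g(\tau^S_f)+g(\tau^S_m)\bigr)$ together with Plackett's identity compresses this into the single inequality $h(\tau^S_m)>h(\tau^S_f)$ with $h(\tau)=\phi_2(\tau,\tau^H;\theta)/\overline{\Phi}_2(\tau,\tau^H;\theta)$, which is tidier than the paper's formulation. But you have not proved that inequality: your final paragraph is a plan (``I would carry out the sign analysis\ldots'') rather than an argument, and the comparison between the survival hazard $-\partial_\tau\log F(\tau)$ and $(\tau-\theta\tau^H)/(1-\theta^2)$ is exactly the hard content of the proposition. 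As it stands, the proof is incomplete at its decisive step.

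That said, your diagnosis of why this step is hard is correct and is in fact more careful than the paper. The paper's Steps 5--6 assert that the ``boost function'' $g(q^S,\theta)=\phi\left(\frac{\theta q^S-\tau^H}{\sqrt{1-\theta^2}}\right)\frac{q^S-\theta\tau^H}{(1-\theta^2)^{3/2}}$ is increasing in $q^S$ and conclude by comparing average boosts over $[\tau^S_f,\infty)$ and $[\tau^S_m,\infty)$; in fact $g$ is a Gaussian-damped linear function that peaks and then decays to zero, so the monotone-boost claim fails globally and the paper's argument is heuristic at precisely the point where yours stalls. Moreover, your non-selective-regime observation ($\tau^H<\tau^S_f$: $p_h=1/2$ at both $\theta\to 0$ and $\theta\to 1$, strictly below $1/2$ in between) is a genuine counterexample to unconditional monotonicity on $[0,1)$, showing that a selectivity hypothesis on $\tau^H$ relative to the screening thresholds is necessary---a qualification that appears in neither the statement nor the paper's proof, though it is consistent with the empirical setting (hiring is far more selective than screening). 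To complete your argument you would need to make the Mills-ratio step precise, e.g.\ show that when $\tau^H$ is sufficiently large relative to $\tau^S_m$, the hazard $\phi(\tau)\,\overline{\Phi}\left(\frac{\tau^H-\theta\tau}{\sqrt{1-\theta^2}}\right)/F(\tau)$ exceeds $(\tau-\theta\tau^H)/(1-\theta^2)$ for all $\tau\in[\tau^S_f,\tau^S_m]$; until that is done, your proof (and, for the same reason, the paper's) establishes the proposition only under such a restriction.
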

\end{mdframed}

\begin{corollary}
When screening and hiring manager scores are perfectly uncorrelated $(\theta = 0)$, the equal selection constraint effectively balances the gender proportion of hires.
In contrast, when the scores are perfectly correlated $(\theta = 1)$, equal selection has no effect on the gender proportion of hires. Under partial correlation $(0<\theta<1)$, higher values of $\theta$ lead to decreasing effectiveness of the constraint.

\end{corollary}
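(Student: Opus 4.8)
The plan is to reduce $p_h$ to a closed form in Gaussian tail probabilities, evaluate the two extreme correlations directly, and then invoke \Cref{prop:ph_vs_theta} to fill in the interior. Because the analysis assumes $\alpha=\beta^S=\beta^H=\delta=\delta^S=\delta^H=0$, both genders share the standard-normal marginal for $Q^S$. Hence the equal-selection thresholds are pinned down by the female applicant share $p_a$ and the shortlist size alone, and in particular are independent of $\theta$: writing $\overline\Phi = 1-\Phi$, equal selection requires $p_a\,\overline\Phi(\tau^S_f) = (1-p_a)\,\overline\Phi(\tau^S_m)$, which forces $\tau^S_f < \tau^S_m$ since $p_a<\tfrac12$. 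I would express the female hire share through the gender-specific ``pass-through'' probability $h(\tau) := \Pr(Q^H>\tau^H \mid Q^S>\tau) = G(\tau,\tau^H;\theta)/\overline\Phi(\tau)$, where $G(a,b;\theta)$ denotes the standard bivariate-normal upper-orthant probability $\Pr(Q^S>a,\,Q^H>b)$ at correlation $\theta$. Since equal numbers of each gender are shortlisted, $p_h = h(\tau^S_f)\,/\,[\,h(\tau^S_f)+h(\tau^S_m)\,]$.

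First I would settle the endpoint $\theta=0$. Independence of $Q^S$ and $Q^H$ makes $G$ factor, so $h(\tau)=\overline\Phi(\tau^H)$ for every $\tau$; the hire rate is identical across genders, a balanced shortlist yields a balanced hire pool, and $p_h=\tfrac12$. This is the ``fully effective'' case.

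Next comes the endpoint $\theta\to1^-$, which I expect to be the main obstacle, since the limiting covariance is singular ($Q^S$ and $Q^H$ become perfectly collinear). Here I would pass to the limit in the orthant probability directly: $G(a,\tau^H;\theta)\to \Pr(Q^S>\max(a,\tau^H)) = \overline\Phi(\max(a,\tau^H))$. Under the model's funnel structure, in which the final hire is (weakly) more selective than the male screening cutoff ($\tau^H\ge\tau^S_m$), shortlist membership becomes non-binding at the hire stage, so $\max(\tau^S_g,\tau^H)=\tau^H$ for both genders and $h(\tau^S_g)\to\overline\Phi(\tau^H)/\overline\Phi(\tau^S_g)$. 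Substituting and using the equal-selection identity $\overline\Phi(\tau^S_f)=\tfrac{1-p_a}{p_a}\overline\Phi(\tau^S_m)$ collapses the expression to $p_h\to p_a$: the constraint is entirely undone. The delicate points to get right are (i) justifying the interchange of limit and the defining integrals as $\theta\uparrow1$, and (ii) making explicit that the clean value $p_a$ relies on $\tau^H\ge\tau^S_m$; for very lenient hiring ($\tau^H<\tau^S_f$) the limit is instead $\tfrac12$, so the ``no effect'' conclusion genuinely uses the selectivity ordering, which I would record as a standing assumption.

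Finally I would assemble the statement. \Cref{prop:ph_vs_theta} supplies monotonicity of $p_h$ in $\theta$ on $(0,1)$, and the two endpoint computations give $p_h=\tfrac12$ at $\theta=0$ and $p_h=p_a<\tfrac12$ at $\theta=1$. Measuring effectiveness by the induced lift $p_h-p_a$ (equivalently, by proximity of $p_h$ to the balanced value $\tfrac12$), monotonicity then yields that effectiveness declines continuously from its maximum $\tfrac12-p_a$ at $\theta=0$ to $0$ at $\theta=1$, which is exactly the corollary. The only real work beyond citing the proposition is the degenerate $\theta\to1$ limit described above.
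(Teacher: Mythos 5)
Your proposal is correct, and its skeleton matches the paper's: the paper also treats the corollary as the two endpoint cases plus the interior monotonicity supplied by Proposition~\ref{prop:ph_vs_theta} (proved in the appendix via the sign of $\partial p_h/\partial\theta$ on $[0,1)$). Where you differ is that the paper handles the endpoints only informally---at $\theta=0$ it notes independence makes the pass-through rate $\overline{\Phi}(\tau^H)$ identical across genders so $p_h=\tfrac12$, and at $\theta=1$ it argues verbally that $Q^S=Q^H$ plus $\tau^S_f<\tau^S_m$ depresses female hiring---whereas you compute both endpoints exactly via the orthant probability, including the degenerate limit $G(a,\tau^H;\theta)\to\overline{\Phi}(\max(a,\tau^H))$ as $\theta\uparrow 1$ and the algebraic collapse to $p_h\to p_a$ using the equal-selection identity. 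That buys you something the paper does not have: you make explicit that the ``no effect at $\theta=1$'' conclusion requires the hiring stage to bind ($\tau^H\ge\tau^S_m$), and you correctly observe that under very lenient hiring ($\tau^H<\tau^S_f$) every shortlisted candidate is hired at $\theta=1$, so the balanced shortlist passes through and $p_h\to\tfrac12$---i.e., the constraint would be \emph{fully} effective in that regime, contradicting a literal reading of the corollary. The paper's intuitive argument silently assumes the realistic funnel ordering (shortlists larger than hires), so your standing assumption is a genuine precision gain rather than a deviation; your only remaining obligation, which you already flag, is the routine continuity-in-$\rho$ justification for interchanging the limit with the Gaussian orthant integral, which follows from Plackett's identity or dominated convergence.
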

\begin{figure}[H] 
    \caption[]{Female proportion of hires ($p_h$) vs. correlation parameter ($\theta$)}
    \centering
    \includegraphics[width=0.5\textwidth, keepaspectratio]{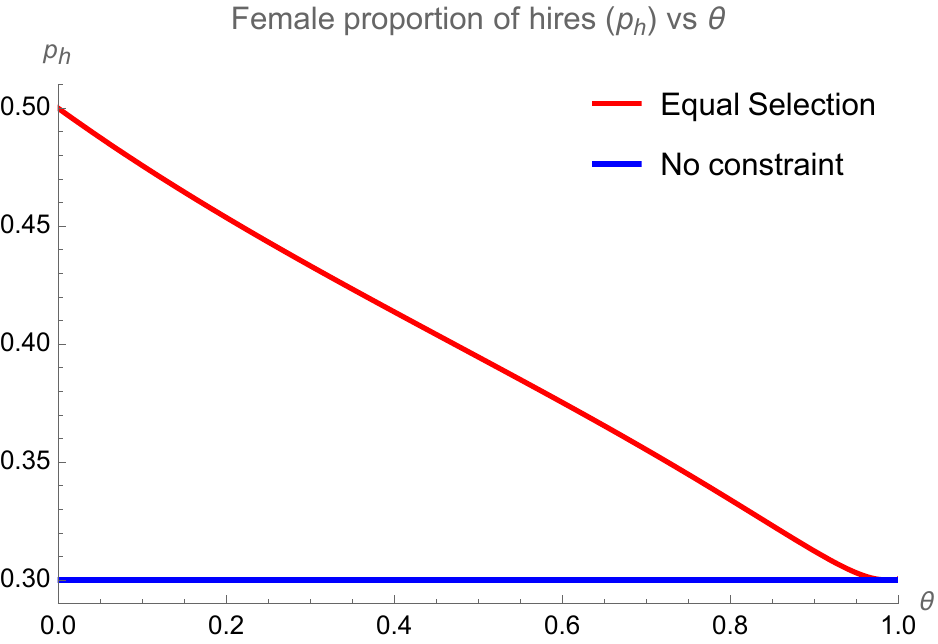}
    \label{fig:pr_fhire_vs_theta}
    \begin{minipage}{1\textwidth}
        \begin{footnotesize}
            \begin{singlespace}
                \emph{Notes:}
                This figure plots the female proportion of hires, $p_h$, as a function of the correlation parameter, $\theta$. The proportion of women in the applicant pool is fixed at $p_a=0.3$.
            \end{singlespace}
        \end{footnotesize}
    \end{minipage}
\end{figure}

A formal proof is provided in \Cref{sec:proof_theta_vs_ph}.
Here, we provide an intuitive explanation.
Under equal selection, the female shortlist threshold ($\tau^S_f$) is adjusted such that an equal number of women and men are shortlisted.
Since there are more men than women in the applicant pool, the shortlist threshold for women will be lower compared to men ($\tau^S_f < \tau^S_m$).
This means that the average $Q^S$ score of women will be lower compared to men in the shortlist.
When the screening and hiring manager scores are perfectly correlated ($\theta = 1$), $Q^S=Q^H$, this translates to lower average $Q^H$ score for women.
So, even though there are an equal number of male and female candidates in the shortlist, the shortlisted female candidates will be less likely to get hired compared to the male candidates.
On the other extreme, when the two scores are perfectly uncorrelated ($\theta = 0$), the $Q^S$ scores are independent of $Q^H$.
Even though shortlisted female candidates have lower average $Q^S$ score than male candidates, they have the same average $Q^H$ score.
Therefore, when $\theta=0$, the probability that a female is hired equals $\frac{1}{2}$.
In partially correlated cases, the outcomes lie between these two extremes: the gender diversity outcomes will be worse when algorithmic and human evaluations align closely.

\newcommand\cmdphVsDelta{The female proportion of hires ($p_h$) decreases with the gender difference in the correlation parameter ($\delta$).}
    \begin{mdframed}
        \begin{proposition}
            \label{prop:ph_vs_delta}
            \cmdphVsDelta{}
        \end{proposition}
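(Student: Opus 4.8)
The plan is to reduce $p_h$ to a simple ratio of conditional hiring probabilities and then isolate the single channel through which $\delta$ enters. First I would record that under the equal selection constraint the screening stage places equal numbers of men and women on the shortlist, while the hiring threshold $\tau^H$ is gender-neutral. Working in the large-sample (expected-proportion) limit, the expected number of each gender hired is the common shortlist size times the corresponding conditional hire probability, so that
\[
p_h = \frac{P_f}{P_f + P_m}, \qquad P_g = \mathbb{P}\big(Q_g^H > \tau^H \mid Q_g^S > \tau_g^S\big),
\]
where for each gender $(Q_g^S, Q_g^H)$ is bivariate standard normal (both marginals $\mathcal{N}(0,1)$ under the maintained assumptions $\alpha=\beta^S=\beta^H=0$), with correlation $\theta_m=\theta$ for men and $\theta_f=\theta-\delta$ for women.

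The key structural observation is that $\delta$ enters only through $P_f$. The gender-specific shortlist thresholds $\tau^S_m,\tau^S_f$ are pinned down jointly by the equal-selection requirement and the marginal laws of $Q^S$; since $Q_m^S$ and $Q_f^S$ are both standard normal irrespective of $\delta$ (the parameter $\delta$ perturbs only the $Q^S$–$Q^H$ correlation, not the $Q^S$ marginal), the thresholds are constants with respect to $\delta$. Consequently $P_m$, which depends on the correlation $\theta$ and on $\tau^S_m$, is independent of $\delta$, and all $\delta$-dependence is carried by $P_f$ through $\theta_f=\theta-\delta$.

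It then remains to show that $P_f$ is strictly decreasing in $\delta$, equivalently strictly increasing in $\theta_f$. Writing $P_f(\theta_f)=\mathbb{P}(Q^H>\tau^H,\,Q^S>\tau^S_f)/\mathbb{P}(Q^S>\tau^S_f)$, the denominator is free of $\theta_f$, so I only need the joint upper-tail probability to be increasing in the correlation. This is exactly Plackett's identity: for the standard bivariate normal CDF $\Phi_2(a,b;\rho)$ one has $\partial_\rho \Phi_2(a,b;\rho)=\phi_2(a,b;\rho)$, and since the upper-tail probability equals $1-\Phi(a)-\Phi(b)+\Phi_2(a,b;\rho)$, its $\rho$-derivative is the bivariate density $\phi_2(\tau^S_f,\tau^H;\theta_f)>0$. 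Hence $P_f$ is strictly increasing in $\theta_f$, hence strictly decreasing in $\delta$. Finally, $\partial p_h/\partial P_f = P_m/(P_f+P_m)^2>0$, so the chain rule yields $\partial p_h/\partial \delta<0$, which is the claim.

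The main obstacle is not any single calculation but making the reduction airtight: I must verify that the shortlist thresholds genuinely do not move with $\delta$—so that the effect is purely through the female correlation and is not confounded by a shifting selection margin—and I must invoke the correlation-monotonicity of the Gaussian orthant probability in its sharp, differentiated form (Plackett's formula) rather than merely the weaker Slepian inequality, since I want a strict sign for the derivative. Everything else, namely reducing $p_h$ to the ratio and applying the chain rule, is routine.
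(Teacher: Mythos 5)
Your proof is correct and follows essentially the same route as the paper's: both rest on Plackett's identity to show that the Gaussian upper-orthant probability $\overline{\Phi}_2(\tau^S,\tau^H;\rho)$ is strictly increasing in the correlation $\rho$, so that the female hire probability (correlation $\theta-\delta$) falls with $\delta$ while the male side is unaffected. Your extra bookkeeping---verifying that the shortlist thresholds are pinned down by the $Q^S$ marginals alone and hence do not move with $\delta$, and the explicit chain-rule step through $p_h = P_f/(P_f+P_m)$---merely makes rigorous what the paper's terser proof leaves implicit.
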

    \end{mdframed}
    
    \begin{figure}[H] 
        \caption[]{Female proportion of hires ($p_h$) vs. gender difference in correlation parameter ($\delta$)}
        \centering
        \includegraphics[width=0.5\textwidth, keepaspectratio]{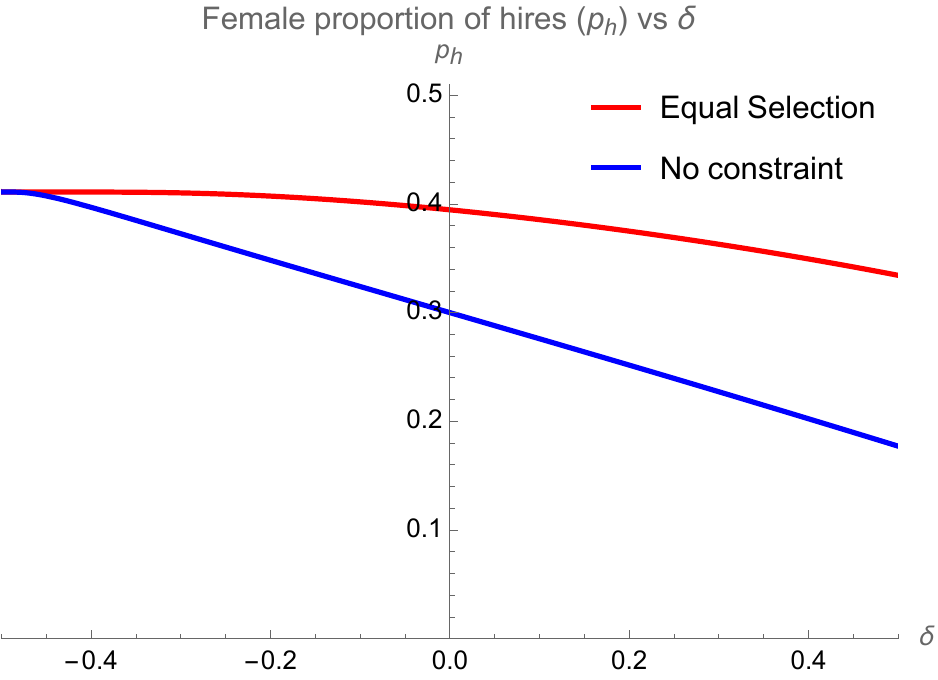}
        \label{fig:pr_fhire_vs_delta}
        \begin{minipage}{1\textwidth}
            \begin{footnotesize}
                \begin{singlespace}
                    \emph{Notes:}
                    This figure plots the female proportion of hires, $p_h$, as a function of the gender difference in correlation parameter, $\delta$. The proportion of women in the applicant pool is $p_a=0.3$.
                \end{singlespace}
            \end{footnotesize}
        \end{minipage}
    \end{figure}

    We provide the proof in \Cref{sec:proof_delta_vs_ph}. The intuition is as follows:
    $\delta > 0$ means that the screening algorithm is less predictive of the hiring manager's evaluation for female candidates, which means lower $Q^H$ scores for female candidates compared to men in the shortlist.
    This in turn means that the probability of a female being hired decreases.
    Therefore, any gender-specific discrepancies in evaluation consistency reduces female proportion of hires, both with and without the equal selection constraint.
    
    \subsubsection{Effects on hire quality}
    
\newcommand\cmdEQvsTheta{Conditional on the predictive accuracy of the screening algorithm ($\theta^S$) and the hiring manager ($\theta^H$), the average hire quality decreases as the correlation ($\theta$) between algorithmic scores and hiring manager scores increases in the space $\theta \in [0, \min\{\frac{\theta^S}{\theta^H}, \frac{\theta^H}{\theta^S}\}]$, with hire quality reaching a global maximum at $\theta=0$.}
\begin{mdframed}
    \begin{proposition}
        \label{prop:eq_vs_theta}
        \cmdEQvsTheta{}
    \end{proposition}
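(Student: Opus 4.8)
The plan is to reduce $E[Q_h]$ to a ratio of bivariate Gaussian tail integrals, differentiate that ratio in $\theta$ using Price's (Plackett's) theorem, and sign the result using the fact that on the stated interval the least–squares weights of $Q$ on $(Q^S,Q^H)$ are nonnegative. Because $(Q,Q^S,Q^H)$ is jointly Gaussian with the covariance fixed in the model setup, the conditional mean of true quality is linear, $E[Q\mid Q^S=s,Q^H=h]=\beta_S s+\beta_H h$, with $\beta_S=(\theta^S-\theta\theta^H)/(1-\theta^2)$ and $\beta_H=(\theta^H-\theta\theta^S)/(1-\theta^2)$. I would flag immediately that $\beta_S\ge 0\iff\theta\le\theta^S/\theta^H$ and $\beta_H\ge 0\iff\theta\le\theta^H/\theta^S$, so both coefficients are nonnegative exactly on $\theta\in[0,\min\{\theta^S/\theta^H,\theta^H/\theta^S\}]$: this is precisely what pins down the domain in the statement. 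Since a candidate of gender $g$ is hired iff $Q^S>\tau^S_g$ and $Q^H>\tau^H$, and the quality distribution is gender-symmetric under the maintained assumptions ($\alpha=\delta=\delta^S=\delta^H=0$), I can write $E[Q_h]=\mathcal N/\mathcal D$ with $\mathcal N=\sum_{g}p_g\,E[Q\,\mathbf 1\{Q^S>\tau^S_g,Q^H>\tau^H\}]$ and $\mathcal D=\sum_g p_g\,P(Q^S>\tau^S_g,Q^H>\tau^H)$, where $p_m=1-p_a$, $p_f=p_a$. Only $\theta$ varies here: the equal-selection thresholds $\tau^S_g$ are fixed by matching shortlist counts, which depend solely on the $\theta$-independent marginals of $Q^S$, and $\tau^H$ is given.

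The key step is differentiation. Plackett's identity gives $\partial_\theta P(Q^S>\tau^S_g,Q^H>\tau^H)=\phi_2(\tau^S_g,\tau^H;\theta)$, and its quality-weighted analogue (Price's theorem applied to $g(Q,Q^S,Q^H)=Q\,\mathbf 1\{\cdot\}$, differentiating in the $(Q^S,Q^H)$ covariance while holding $\theta^S,\theta^H$ fixed) gives $\partial_\theta E[Q\,\mathbf 1\{\cdot\}]=(\beta_S\tau^S_g+\beta_H\tau^H)\,\phi_2(\tau^S_g,\tau^H;\theta)$, i.e. the density of reaching the selection corner times the predicted quality $E[Q\mid Q^S=\tau^S_g,Q^H=\tau^H]$ at that corner. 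Substituting into the quotient rule collapses everything to
\[
\frac{d}{d\theta}E[Q_h]=\frac{\mathcal D'}{\mathcal D}\Bigl(\frac{\mathcal N'}{\mathcal D'}-E[Q_h]\Bigr)=\frac{\mathcal D'}{\mathcal D}\Bigl(E[Q\mid\text{boundary hire}]-E[Q\mid\text{hire}]\Bigr),
\]
with $\mathcal D,\mathcal D'>0$. The sign therefore reduces to comparing the expected quality of the marginal candidates sitting at the selection corner $(\tau^S_g,\tau^H)$ against the average quality over the whole hired pool.

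Signing this difference is where the real work lies. Writing it as $\beta_S\bigl(\bar\tau^S_\psi-E[Q^S\mid\text{hire}]\bigr)+\beta_H\bigl(\tau^H-E[Q^H\mid\text{hire}]\bigr)$, where $\bar\tau^S_\psi=\sum_g p_g\tau^S_g\phi_2(\tau^S_g,\tau^H;\theta)/\sum_g p_g\phi_2(\tau^S_g,\tau^H;\theta)$, the $\beta_H$ term is immediately nonpositive: every hire satisfies $Q^H>\tau^H$, so $E[Q^H\mid\text{hire}]>\tau^H$, and $\beta_H\ge 0$ on the interval. The $\beta_S$ term is the genuine obstacle. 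Within a single gender the corner value $\tau^S_g$ lies below the conditional mean $E[Q^S\mid\text{hire},g]$, but the marginal-hire weights $p_g\phi_2(\tau^S_g,\tau^H;\theta)$ differ from the hire-mass weights $p_g P(\cdot)$ and tilt toward the higher-threshold (male) group, so the comparison is not term-by-term and can feature positive cross-group contributions. I expect the crux to be establishing $\bar\tau^S_\psi\le E[Q^S\mid\text{hire}]$ uniformly on the interval, which I would attack through the monotone-hazard/log-concavity properties of the Gaussian tail (so that the hazard-weighted threshold never overtakes the truncated screening mean). I would first verify this cleanly at $\theta=0$, where selection factorizes and the inequality splits via the Mills-ratio bound $\lambda(x)>x$ together with a convexity comparison across $\tau^S_m$ and $\tau^S_f$, and then extend it across $\theta\in(0,\min\{\theta^S/\theta^H,\theta^H/\theta^S\}]$ using $\partial_\theta\phi_2>0$. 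With both terms nonpositive and $\beta_S,\beta_H$ not simultaneously zero, the bracket is strictly negative, so $E[Q_h]$ strictly decreases on the interval and attains its maximum at $\theta=0$, as claimed. The only step I do not expect to be mechanical is that cross-group $\beta_S$ comparison; everything upstream follows directly from the two differentiation identities.
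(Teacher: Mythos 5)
Your derivative computation reproduces the paper's key identity: via Price/Plackett you arrive at $\frac{d}{d\theta}E[Q_h]=\frac{\mathcal D'}{\mathcal D}\bigl(E[Q\mid\text{boundary hire}]-E[Q\mid\text{hire}]\bigr)$, which is exactly the boxed expression the paper obtains by differentiating Tallis's closed form for the truncated-normal mean, and your observation that $\beta_S\ge 0$ and $\beta_H\ge 0$ precisely on $\theta\in[0,\min\{\theta^S/\theta^H,\theta^H/\theta^S\}]$ is the same mechanism that pins down the interval in the paper. The first genuine gap is the step you yourself flag as the crux: the pooled cross-group inequality $\bar\tau^S_\psi\le E[Q^S\mid\text{hire}]$ is never proved --- you offer only a program (Mills-ratio bound at $\theta=0$, then an extension via ``$\partial_\theta\phi_2>0$'', which is not a correct identity: Plackett gives $\partial_\theta\Phi_2=\phi_2>0$, but the density $\phi_2(a,b;\theta)$ itself is not monotone in $\theta$). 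This difficulty is largely self-inflicted. The paper signs the bracket \emph{per gender group}, where it is immediate: with the group-specific threshold $\tau^S_g$, the conditional mean $m(s,h)=\beta_S s+\beta_H h$ is nondecreasing in both arguments on the stated interval, so the corner value $m(\tau^S_g,\tau^H)$ is a lower bound for $m$ over the hired quadrant $\{s>\tau^S_g,\ h>\tau^H\}$ and hence below the within-group hired average; no comparison of hazard weights across groups is needed. (The paper then aggregates as a weighted average across the two groups.) Your pooled route, by contrast, really does admit positive cross-terms --- when $p_a$ is small, $\tau^S_m-\tau^S_f$ is large and the male corner mean can exceed the female hired average --- so your unproven lemma carries the entire burden of the proof; your decomposition has turned an immediate per-group inequality into a delicate uniform bound, which is, in pooled form, the same aggregation subtlety the paper dispatches in its final paragraph.

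The second gap is that you have not proved the ``global maximum at $\theta=0$'' clause of the proposition. Your monotonicity argument covers only $\theta\in[0,\min\{\theta^S/\theta^H,\theta^H/\theta^S\}]$, so it yields a maximum on that sub-interval; the proposition asserts $\theta=0$ is a \emph{global} maximum over all of $[0,1)$, including the region where $\beta_S$ or $\beta_H$ turns negative and the derivative is no longer sign-controlled. The paper handles this with a separate Part 2: it writes $\Delta(\theta)=E[Q_h]_{\theta=0}-E[Q_h]_{\theta}$, reduces its positivity to $P(Q^S>\tau^S\mid Q^H>\tau^H)>P(Q^S>\tau^S\mid Q^H=\tau^H)$ for $\theta>0$, and proves this by showing $g(q^H)=P(Q^S>\tau^S\mid Q^H=q^H)$ is strictly increasing in $q^H$. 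Some such direct comparison between $\theta=0$ and arbitrary $\theta\in(0,1)$ is required, and your proposal is silent on it.
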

\end{mdframed}

\begin{figure}[H] 
    \caption[]{Expected quality of hire ($E[Q_h]$) vs. $\theta$, $\theta^S$}
    \centering
    \includegraphics[width=\textwidth, keepaspectratio]{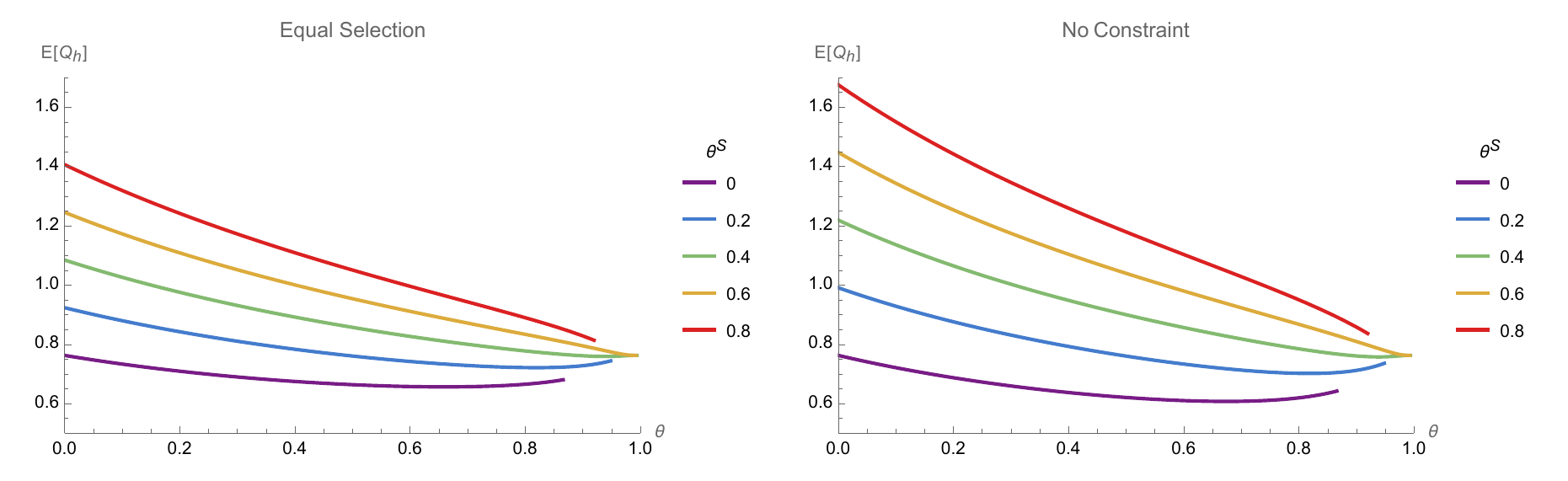}
    \label{fig:eq_hire_vs_theta_thetas}
    \vspace*{-0.2in}
    \begin{minipage}{1\textwidth}
        \begin{footnotesize}
            \begin{singlespace}
                \emph{Notes:}
                This figure plots the expected quality of hires, $E[Q_h]$, as a function of the correlation parameter, $\theta$ for different $\theta^S$ values. The rest of the parameters are fixed at $\theta^H=0.5, p_a=0.3, \delta=0$.
            \end{singlespace}
        \end{footnotesize}
    \end{minipage}
\end{figure}

We provide the formal proof in \Cref{sec:proof_theta_vs_eq} and provide an intuitive explanation here.
The screening score ($Q^S$) and hiring manager score ($Q^H$) act as two noisy signals providing information about the candidate's true quality ($Q$).
The individual informativeness of these signals (i.e., $\theta^S, \theta^H$) is fixed.
A key principle, discussed by \textcite{clemen_limits_1985}, states that when combining signals, \textit{less correlated} signals collectively provide more information about the underlying value than \textit{highly correlated} signals (for a fixed level of individual signal informativeness).
Thus, higher correlation leads to redundant information, reducing the overall quality gains from screening.

\subsection{Implications for Algorithm Design}
    \label{sec:theoretical_discussion}
    Our theoretical analysis offers clear guidance on designing screening algorithms to simultaneously maximize hire quality and workforce diversity. 
    Specifically, our findings highlight three key points: (1)~hire quality increases with the predictive accuracy of the screening algorithm ($\theta^S$), (2)~the proportion of female hires decreases with increased correlation ($\theta$) between algorithmic scores and hiring manager evaluations under equal selection constraints (Proposition~\ref{prop:ph_vs_theta}), and (3)~conditional on predictive accuracy ($\theta^S$), higher correlation ($\theta$) reduces expected hire quality (Proposition~\ref{prop:eq_vs_theta}). 
    Therefore, the optimal strategy involves selecting screening algorithms with high $\theta^S$ (good at predicting true quality) but low $\theta$ (distinct from human evaluations), making the algorithms \textit{complementary} to human assessments.

    \subsubsection{Algorithmic Selection with Independent Parameters}

    Consider a scenario where a firm chooses between two screening algorithm vendors. 
    Both algorithms are equally accurate at predicting true quality (i.e., $\theta^S_1 = \theta^S_2$) but differ in their correlation with hiring managers' evaluations (i.e., $\theta_1 \neq \theta_2$). Such differences can arise if algorithms rely on varying feature sets.
    
    The firm should choose the algorithm with lower correlation ($\theta$) to hiring manager assessments. Despite similar predictive performance, lower correlation algorithms offer less redundant information, thus enhancing both diversity and expected hire quality under equal selection constraints.

    \subsubsection{Balancing Predictive Accuracy and Managerial Complementarity}
    
    In practice, a firm often designs a screening algorithm with a fixed information source, such as resumes, creating inherent trade-offs between predictive accuracy ($\theta^S$) and complementarity to human evaluations ($\theta$). 
    We outline several algorithm training strategies based on available target variables (Figure~\ref{fig:training-options} illustrates these visually):

    \begin{figure}[t]
        \centering
        \begin{tikzpicture}
            \node[circle,draw, thick, minimum size = 1.5cm] (A) at (0,0) {$Q^S$};
            \node[circle,draw, minimum size = 1.5cm] (B) at (0,-2) {$\mathcal{A}$};
            \draw[-{Latex[length=2mm]}] (A) -- (B);
            \node[below] at (0,-3) {(1)};

            \node[circle,draw, thick, minimum size = 1.5cm] (C) at (2,0) {$Q^H$};
            \node[circle,draw, minimum size = 1.5cm] (D) at (2,-2) {$\mathcal{A}$};
            \draw[-{Latex[length=2mm]}] (C) -- (D);
            \node[below] at (2,-3) {(2)};

            \node[circle,draw, thick, minimum size = 1.5cm] (E) at (4,0) {$Q$};
            \node[circle,draw, minimum size = 1.5cm] (F) at (4,-2) {$\mathcal{A}$};
            \draw[-{Latex[length=2mm]}] (E) -- (F);
            \node[below] at (4,-3) {(3)};

            \node[circle,draw, thick, minimum size = 1.5cm] (G) at (6,0) {$Q$};
            \node[circle,draw, thick, minimum size = 1.5cm] (H) at (8,0) {$Q^H$};
            \node[circle,draw, minimum size = 1.2cm] (I) at (7,-2) {$\mathcal{A}$};
            \draw[-{Latex[length=2mm]}] (G) -- (I);
            \draw[-{Latex[length=2mm]}] (H) -- (I);
            \node[below] at (7,-3) {(4)};


        \end{tikzpicture}

        \caption{Target variable options for training a screening algorithm $\mathcal{A}$.}
        \label{fig:training-options}

        \label{fig:my_label}
    \end{figure}
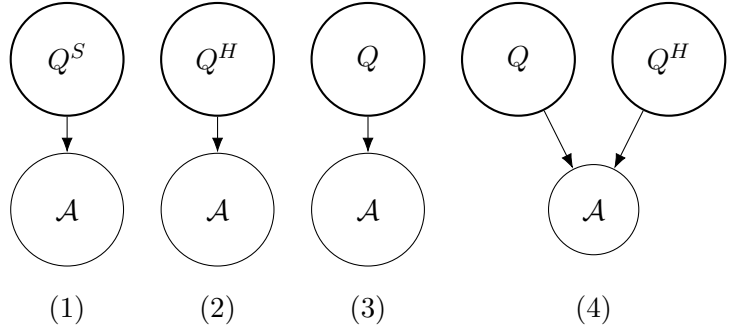

\begin{itemize}

\item \textbf{Option 1 (Historical Human Screener Scores):} Training on historical screening scores ($Q^S$) provides abundant data but has limited control over both $\theta^S$ and $\theta$ since past screening scores are proxies, not perfect predictors of true quality.

\item \textbf{Option 2 (Hiring Manager Scores):} Training directly on hiring manager evaluations ($Q^H$) maximizes correlation ($\theta$), diminishing the diversity benefits of equal selection.

\item \textbf{Option 3 (True Quality Scores):} Training on actual job performance data ($Q$) maximizes predictive accuracy ($\theta^S$) but offers no direct control over managerial correlation ($\theta$).

\item \textbf{Option 4 (Multi-objective Learning):} Training simultaneously on true quality ($Q$) and hiring manager evaluations ($Q^H$) using techniques such as adversarial learning. This balances high predictive accuracy and managerial complementarity, optimizing both diversity and hire quality simultaneously.

\end{itemize}

While these practical strategies describe how different target variables affect predictive accuracy and managerial correlation, it is also crucial to consider theoretical limitations in simultaneously optimizing these parameters when the overall information available is fixed.

We now specifically examine this scenario, where the total information regarding true candidate quality ($Q$), conditional on the algorithm's scores ($Q^S$) and the hiring manager's evaluations ($Q^H$), remains constant. Formally, the information given by $Q^S$ and $Q^H$ about $Q$ is given by the conditional entropy $H(Q|Q^S, Q^H)$ which is:
\begin{equation}
H(Q|Q^S, Q^H) = \frac{1}{2}\cdot \log\left(\frac{2e\pi\cdot \text{Det}\left( \begin{bsmallmatrix}
1 & \theta^S & \theta^H \\
\theta^S & 1 & \theta \\
\theta^H & \theta & 1
\end{bsmallmatrix}\right)}{1-\theta^2}\right)
\end{equation}

\noindent Under these fixed-information conditions, any attempt to maximize predictive accuracy ($\theta^S$) will inherently constrain efforts to reduce correlation ($\theta$) and vice versa. 
If we express $\theta^S$ as a function of $\theta$, with fixed information $H_0$, we get:
\begin{equation}
\theta^{S} =
\theta \cdot \theta^{H} \pm
\sqrt{\left(1-(\theta^{H})^{2}\right) \cdot \left(1-\theta^{2}\right)
-\frac{(1-\theta)\,e^{2H_0}}{2e\pi}}
\end{equation}
\noindent Figure~\ref{fig:equalinfo} demonstrates these trade-offs, showing pairs of ($\theta$, $\theta^S$) that yield constant information. Maximizing $\theta^S$ initially enhances predictive performance, but further reductions in correlation ($\theta$) inevitably decrease $\theta^S$. 
But, notably, even after reaching peak $\theta^S$, further reducing $\theta$ can still increase expected hire quality due to the reduction in redundant information in a single stage.\footnote{
  Decreasing $\theta$ affects the expected quality of hires via two channels: (1)~there is a direct effect of $\theta$, where decreasing $\theta$ increases $E[Q_h]$, and (2)~there is an indirect effect via $\theta^S$, where decreasing $\theta$ also decreases $\theta^S$, which in turn decreases $E[Q_h]$.
   Interestingly, the net effect of decreasing $\theta$ still increases $E[Q_h]$ even though $\theta^S$ is simultaneously decreasing---meaning that the direct increase in $E[Q_h]$ due to the decrease in $\theta$ offsets the indirect decrease in $E[Q_h]$ due to decreasing $\theta^S$.
}
    
    \begin{figure}[t] 
        \caption[]{Simultaneously optimizing $\theta^S$ and $\theta$ with fixed information about $Q$}
        \centering
        \includegraphics[width=\textwidth, keepaspectratio]{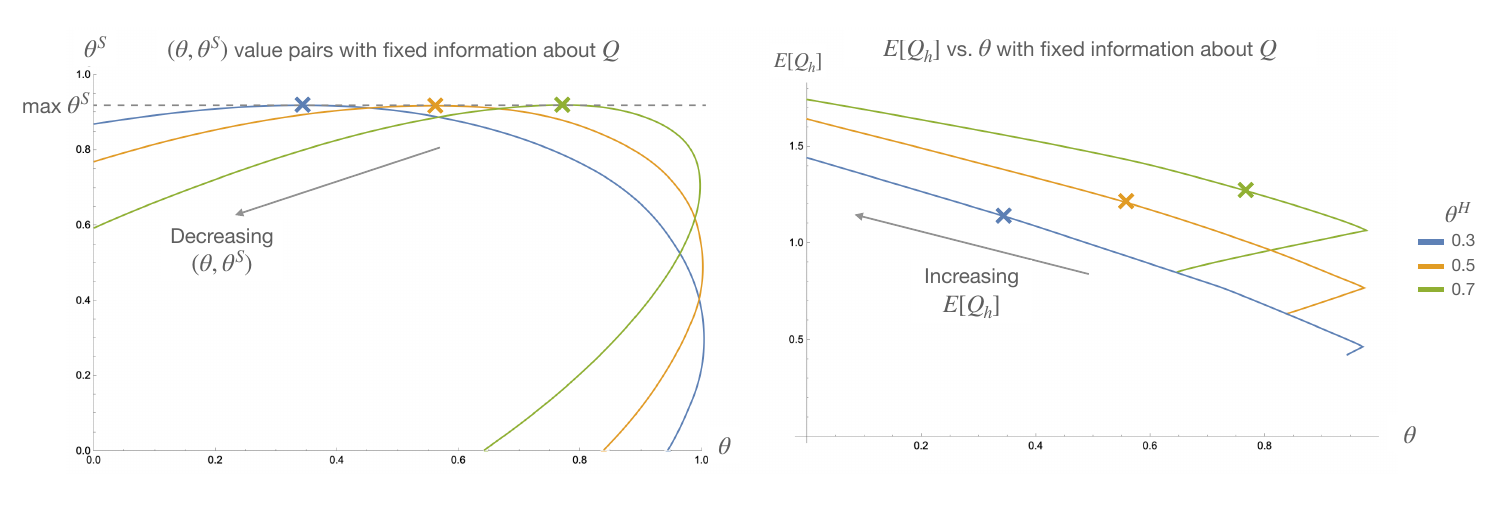}
        \label{fig:equalinfo}
        \begin{minipage}{0.9\textwidth}
            \begin{footnotesize}
                \begin{singlespace}
                    \emph{Notes:}  
                    The left panel plots equal-information $(\theta, \theta^S)$ pairs yielding the same conditional entropy $H(Q|Q^S, Q^H)=0.5$.
                    The 'x' marks the point where $\theta^S$ is maximized.
                    The right panel plots expected hire quality $E[Q_h]$ as a function of $\theta$ using these equal-information pairs.
                    $\theta^H$ is held fixed at $\theta^H \in {0.3, 0.5, 0.7}$.
                \end{singlespace}
            \end{footnotesize}
        \end{minipage}
    \end{figure}
    
\subsubsection{Directly Minimizing Gender Differences in \texorpdfstring{$Q^H$}{Q\^H} Scores}

Another approach is to directly minimize gender differences in $Q^H$ scores in the shortlist. This can be implemented by training two predictors—one for $Q$ and another for $Q^H$—and then shortlisting candidates with high predicted $Q$ scores while minimizing gender differences in predicted $Q^H$ scores.

This approach deliberately selects male and female candidates with similar hiring manager scores, maximizing the likelihood of gender-balanced hiring. While not minimizing $\theta$ to zero (and thus not maximizing hire quality to the fullest extent), this method effectively balances diversity and quality goals while being simpler to implement than adversarial approaches.

In summary, the firm's goal in algorithm design should be to create a screening system that complements, rather than replicates, managerial evaluations, thereby maximizing both quality and diversity outcomes under equal selection constraints.

\section{Data and Empirical Methodology}
    \label{sec:empirical_modeling}

    Our theoretical analysis demonstrates that the effectiveness of the equal selection constraint depends significantly on parameters such as the correlation between screening algorithms and hiring manager evaluations ($\theta$), and gender differences in these evaluations ($\delta$).
    The optimal screening algorithm is one that is trained on both true quality and the hiring manager's assessment of quality.
    However, in practice, screening algorithms are typically trained on historical human screening decisions rather than true job performance, primarily due to data availability.\footnote{For example, \textit{LinkedIn Recruiter's} recommendation algorithm is trained on the human recruiter's decisions since it has no visibility into the true job performance of the candidates.} 
    This raises an empirical question: if firms train screening algorithms based on historical recruiter decisions, how effective will equal selection constraints be in improving diversity outcomes?
    
    In this section, we describe our empirical approach to addressing this question. 
    We estimate model parameters using actual hiring data from multiple firms, and then use these parameters to estimate the effectiveness of equal selection constraints across different job contexts. 
    We also benchmark these outcomes against our proposed complementary screening algorithm and other commonly used fairness metrics using simulation.

    \subsection{Data Description}
    We use Applicant Tracking System (ATS) data from eight U.S.-based technology companies, provided by an HR analytics software vendor. 
    This dataset includes detailed records for 799,000 external job applicants (60\% male, 40\% female) across 3,608 unique job postings. Each record captures candidate attributes (such as gender and experience), resumes, job posting details, and outcomes at each hiring stage (screening, first interview, subsequent interviews, and offers). 
    
    Table~\ref{tab:counts} summarizes the number of applicants and job postings by job category. 
    Although specific hiring processes can vary slightly across firms, the typical hiring sequence for external applicants, as shown in Figure~\ref{fig:hiring_funnel}, involves four main stages: Screening, First Interview, Subsequent Interviews, and Offer. 
    Initially, applicants undergo a screening stage. 
    Those who pass the screening proceed to the first interview stage, followed by subsequent interviews, and finally receive an offer if selected.\footnote{
    For our empirical analysis, we specifically focus on two critical stages: the initial screening stage—where the equal selection constraint is applied—and the subsequent first interview stage. 
    Although the actual hiring process is multi-staged, this simplified focus remains appropriate. To illustrate, consider a scenario with an applicant pool comprising 70\% males and 30\% females. 
    With an equal selection constraint, the shortlisted candidates following the screening stage would consist of an equal gender split (50/50). 
    However, the hiring manager in the first interview stage might partially reverse this constraint, resulting in a gender ratio such as 60/40. 
    Provided that selections in subsequent stages are unbiased---a central assumption in our theoretical model---this revised gender ratio would persist throughout the remainder of the hiring process.
    }    
    On average, a typical job posting attracts 233 applicants, with about 36 advancing past the initial screening, approximately 7 candidates progressing beyond the first interview, and around 2 receiving offers.
    
    \begin{table}[t] 
        \centering
        \caption{Number of applicants and job postings by job category}
        \begin{adjustbox}{max width=\textwidth}
            \begin{threeparttable}
                \begin{tabular}{lrr}
    \toprule
    Job Category                        & N Applicants  & N Jobs \\
    \midrule
    Engineering \& Technical            & 214,943 & 1,178  \\
    Product \& Design                   & 130,669 & 534    \\
    Sales \& Marketing                  & 92,559  & 391    \\
    Legal \& PR                         & 75,955  & 332    \\
    Other                               & 70,864  & 53     \\
    Finance \& Accounting               & 69,536  & 308    \\
    Biz Dev \& Operations               & 51,523  & 299    \\
    Human Resources                     & 48,122  & 246    \\
    Customer Service \& Acct Management & 42,199  & 238    \\
    \midrule
    Overall                             & 799,108 & 3,608  \\
    \bottomrule
\end{tabular}

            \end{threeparttable}
        \end{adjustbox}
        \label{tab:counts}
    \end{table}

    \begin{figure}[t] 
        \caption{Hiring funnel}
        \centering
        \includegraphics[width=5in, keepaspectratio]{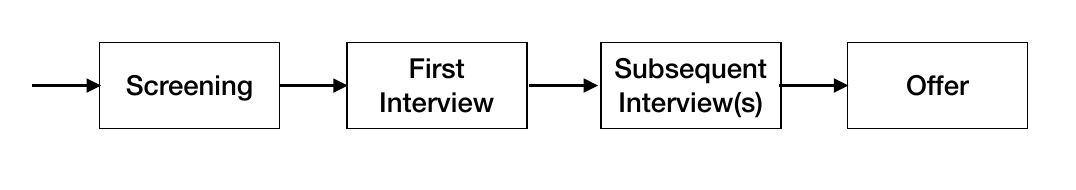}
        \label{fig:hiring_funnel}
    \end{figure}

    \subsection{Empirical Strategy}
    \label{sec:param_estimation}
    
    Our empirical approach consists of three key steps:
    
    \paragraph{Step 1: Estimating Screening and Hiring Manager Scores.}
    The Applicant Tracking System (ATS) provides only binary outcomes (screening and interview decisions). To analyze the effectiveness of equal selection constraints, we first derive continuous quality scores for both the screening and hiring manager stages.
    We achieve this by training two machine learning (ML) models separately: one predicting screening decisions and the other predicting hiring manager decisions, using candidate resume texts and job descriptions as inputs. Further technical details about these models are discussed in \Cref{sec:ml_models}.\footnote{Note that we observe binary screening decisions and not the assessed scores by the screener and we observe binary hiring manager decisions only for shortlisted candidates. By building the ML models, we can infer continuous quality scores for all candidates and all stages.}
    
    We employ BigBird, a transformer model specifically optimized for processing long text documents~\parencite{zaheer_big_2020}.
    To address selection bias arising from observing hiring manager decisions only for shortlisted candidates, we apply inverse propensity weighting to re-weight observations based on their probability of passing the screening stage. 
    This provides unbiased estimates of hiring manager evaluations for all candidates.
    
    Predicted decision probabilities from the ML models are converted into quality scores using a Gaussian copula transformation, aligning with our theoretical assumption of multivariate Gaussian distributions for quality scores.\footnote{
        Gaussian copulas are multivariate Gaussian distributions, whose marginals are uniformly distributed.
        They offer a flexible way to disentangle multivariate Gaussian distribution as a product of uniform marginal distributions and a Gaussian copula that ``couples'' them (See~\textcite{joe_dependence_2014,nelsen_introduction_2007} for a reference on copulas).
        Formally, the joint empirical distribution of quality scores $(\hat{q}, \hat{q}^S, \hat{q}^H)$ has CDF $F_{\hat{q},\hat{q}^S,\hat{q}^H}(x,y,z;\Sigma) = C(F_{\hat{q}}(x), F_{\hat{q}^S}(y), F_{\hat{q}^H}(z))$.
        Here, $C$ is the 3-dimensional Gaussian copula, $C(u,v,k) = \Phi(\Phi^{-1}(u), \Phi^{-1}(v), \Phi^{-1}(k))$, and  $\Phi$ is the CDF of a multivariate Gaussian distribution.
        This transformation ensures that we stay close to the theoretical model, which assumes that the quality scores have a multivariate normal distribution. 
    }
    We show in \Cref{apx:additional_analyses} that the Gaussian copula has good goodness-of-fit measures on our empirical data compared to other copulas.
    The predicted probabilities are then transformed into quality scores via quantiles:
        \begin{align}
        \hat{q}^S_{i,j} &= \textit{Quantile}(\hat{p}^S_{i,j}, \bm{\hat{p}}^S_j), \\
        \hat{q}^H_{i,j} &= \textit{Quantile}(\hat{p}^H_{i,j}, \bm{\hat{p}}^H_j).
        \end{align}
    
    \paragraph{Step 2: Parameter Estimation ($\theta$ and $\delta$).}
    Using the recovered continuous quality scores, we estimate the critical parameters:
    
    \begin{itemize}
    \item \textbf{Correlation Parameter ($\theta$)}: We estimate the correlation between screening scores and hiring manager evaluations for each job posting using the Spearman rank correlation coefficient. This measure is robust to transformations and widely used in practice.\footnote{Using pearson correlation leads to highly similar results.}
    \begin{equation}
    \hat{\theta}j = \text{Spearman}(\bm{\hat{q}}{j}^S, \bm{\hat{q}}_{j}^H).
    \end{equation}
    
    \item \textbf{Gender Difference Parameter ($\delta$)}: We calculate the difference in correlation between male and female candidates' evaluations for each job posting.
    \end{itemize}
    
    These parameters are aggregated across job postings, weighted by the number of applicants per job, to derive representative average values.
    The rest of the parameters ($p_a, \tau^S, \tau^H$), are observed directly from the data.\footnote{
        We set the job-specific shortlist and hiring manager thresholds based on the actual size of the shortlist and finalist observed in the data.
       In doing so, we conceive the thresholds as exogenous variables.
       For example, the firm may have a limited budget to interview candidates and can only afford to interview a certain number of candidates.
       The shortlist and hiring manager threshold is therefore set based on the observed size of the shortlist and finalist respectively.
    }
    
    \paragraph{Step 3: Counterfactual Policy Simulation.}
    Using the estimated parameters, we conduct counterfactual simulations to evaluate the effectiveness of equal selection constraints in enhancing workforce diversity.
    We compare the performance of these constraints against our proposed complementary screening algorithm and other widely adopted fairness criteria, allowing us to empirically demonstrate their relative effectiveness.
    
    \subsection{ML Model Details and Performance}
    \label{sec:ml_models}
    
    To train the screening and hiring manager models, we consolidate each candidate's resume with relevant job information—such as company name, job title, business unit, employment type, location, skills, and keywords—into a unified input document. 
    The skills and keywords are sourced from a comprehensive skills dictionary developed through an extensive analysis of LinkedIn profile data.
    
    We partition the dataset into training (80\%), validation (10\%), and hold-out test sets (10\%), stratifying by job postings to ensure representativeness and robust model evaluation. 
    We follow \textcite{sun_how_2019} for picking the optimal hyperparameters and select them based on validation performance (area under the ROC curve): \texttt{Epochs=3, Batch Size=14, Learning Rate=2e-5, Weight Decay=2e-5}.
    
    The screening model training/evaluation set consists of 725,351 observations, with a hold-out test set of 73,757 observations. 
    The hiring manager model training/evaluation set includes 106,419 observations, with a hold-out test set comprising 11,357 observations.
    
    Model evaluation indicates strong predictive performance: the screening model achieves an AUC score of 0.83, while the hiring manager model achieves an AUC of 0.68. 
    We see no significant differences in predictive performance between male and female candidates. 
    Additional model performance details can be found in \Cref{apx:clf_models}.
    
    \subsection{Inverse Propensity Weighting}
    
    Since hiring manager decisions are only available for candidates who have passed the screening stage, we employ inverse propensity weighting to mitigate selection bias. 
    Specifically, candidates less likely to be shortlisted (based on screening predictions) receive higher weights, while those more likely receive lower weights.
    This adjustment ensures unbiased estimation of hiring manager scores across the full applicant pool as long as there is noise in the selection process (see \textcite{cowgill_bias_2020}).
    
    \section{Empirical Results}
    \label{sec:empirical_results}
    
    This section presents empirical findings based on our analysis of the hold-out test set.

    \subsection{Parameter estimates}
    \label{sec:param_estimates}

    \begin{figure}[t] 
        \caption{Distribution of parameter estimates across job postings}
        \centering
        \includegraphics[width=4in, keepaspectratio]{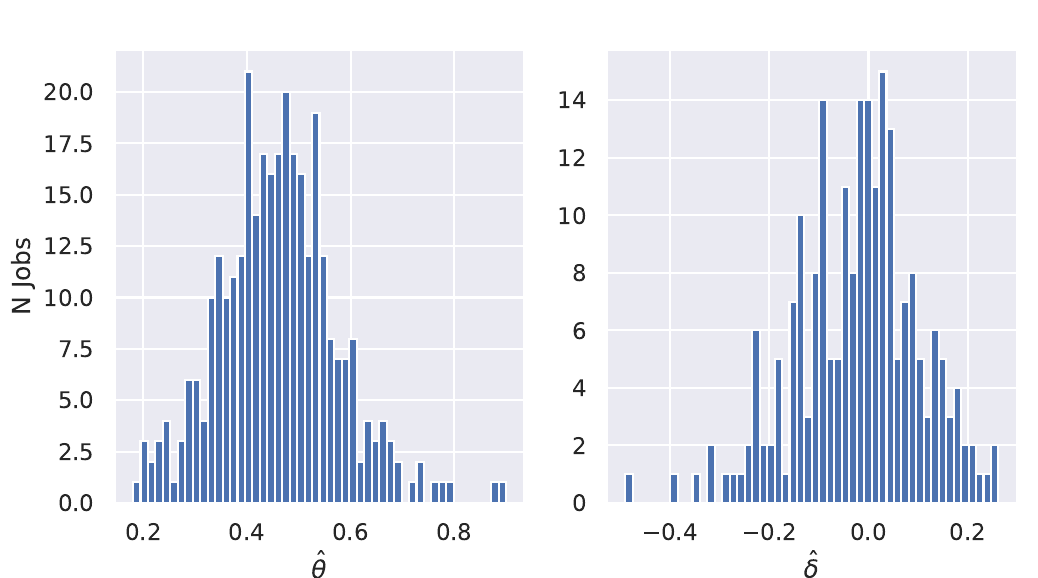}
        \label{fig:param_dist}
    \end{figure}

    \begin{table}[t] 
        \centering
        \caption{Average parameter estimates}
        \label{tab:theta_estimates}
        \begin{adjustbox}{max width=\textwidth}
            \begin{threeparttable}
                \begin{tabular}{lrr}
    \toprule
    Job Category                        & $\hat{\theta}$ & $\hat{\delta}$ \\
    \midrule
    Finance \& Accounting               & 0.493          & -0.019         \\
    Engineering \& Technical            & 0.449          & -0.009         \\
    Sales \& Marketing                  & 0.442          & 0.003          \\
    Product \& Design                   & 0.441          & 0.031          \\
    Customer Service \& Acct Management & 0.436          & -0.02          \\
    Biz Dev \& Operations               & 0.413          & -0.014         \\
    HR                                  & 0.403          & -0.074         \\
    Legal \& PR                         & 0.348          & -0.016         \\
    Other                               & 0.245          & -0.332         \\
    \midrule
    Average                             & 0.434          & -0.007         \\
    \bottomrule
\end{tabular}

                \begin{tablenotes}[flushleft]
                    \footnotesize
                    \item \emph{Notes: This table reports the average parameter estimates for each job category. We estimate the parameters at the job posting level and aggregate it up to the job category level for all jobs in the hold-out test set.
                    }
                \end{tablenotes}
            \end{threeparttable}
        \end{adjustbox}
    \end{table}

    We estimate model parameters ($\theta$, $\delta$) separately for each job posting, using the methodology in \Cref{sec:param_estimation}, and plot their distribution in \Cref{fig:param_dist}. 
    \Cref{tab:theta_estimates} reports average parameter values aggregated by job category.

    The average correlation parameter estimate, $\hat{\theta}$, is 0.43, but varies considerably across jobs.
    Higher correlation is typically observed in technical roles requiring ``hard skills'' (e.g., Finance \& Accounting, Engineering \& Technical). 
    In contrast, lower correlation occurs in roles emphasizing ``soft skills'' (e.g., HR, Legal \& PR).
    A likely explanation is that hard skills are more readily assessable from resumes, thus aligning screener and hiring manager evaluations more closely.

    Regarding gender differences, the overall average estimate for $\delta$ is -0.007, suggesting that, on average, the screening criteria align similarly for men and women.\footnote{
        Note that the ``Other'' category has a high estimate of $\delta$, but this is likely due to the small sample size (only 214 applicants in this category in the hold-out test set).}
    But, there is substantial variability across job postings, with estimates ranging from -0.4 to 0.2.

    \subsection{Effectiveness of the Equal Selection Constraint}

    We estimate the impact of equal selection constraints using counterfactual simulations. 
    For each job posting with underrepresented female applicants ($p_a < 0.5$), we simulate outcomes using the estimated job-specific parameters.

    \Cref{tab:parametric_simulation_job_cat} reports the aggregate results by job category. 
    The equal selection constraint raises the proportion of women from an average of 31\% in the applicant pool to 50\% in the shortlist (by design). 
    However, this proportion decreases to 41\% among finalists, showing a modest overall improvement.

    The effectiveness varies notably across job categories. 
    For example, Engineering \& Technical roles achieve only a 36\% representation of women in the finalist stage, indicating limitations in equal selection effectiveness in highly technical fields.
    
    \subsection{Test of propositions}
    \label{ssec:test_of_propositions}
    We next exploit the variation in parameter estimates across jobs to empirically test \Cref{prop:ph_vs_theta} and \Cref{prop:ph_vs_delta} using the following regression specification:
    
    \begin{equation}
        \label{eq:prop_1_reg}
        p_{h,j} = \beta_0 + \beta_1 p_{a,j} + \beta_2 \theta_j + \beta_3 \theta^2_j + \beta_4 \delta_j + \beta_5 \delta^2_j + \epsilon_j
    \end{equation}
    
    where $p_{h,j}$ is the estimated proportion of women in the finalist pool for job $j$, $p_{a,j}$ is the observed proportion of women in the applicant pool for job $j$, $\theta_j$ and $\delta_j$ are the estimated correlation and gender difference parameters for job $j$ respectively.
    We include the squared terms of $\theta_j$ and $\delta_j$ to capture non-linear relationships.
    
    We estimate the model using the hold-out test set with and without the equal selection constraint and report the results in \Cref{tab:prop_1_reg}.
    
    Consistent with the theoretical predictions, we find a negative relationship between $\theta$ and $p_h$ under the equal selection constraint and a negative relationship between $\delta$ and $p_h$ with and without the equal selection constraint.
    
    \begin{table}[t] 
        \centering
        \caption{Empirical test of propositions}
        \label{tab:prop_1_reg}
        \begin{adjustbox}{max width=\textwidth}
                \begingroup
\centering
\begin{tabular}{lcc}
   \tabularnewline \midrule \midrule
                    & No Constraint   & Equal Selection \\   
   Model:           & (1)             & (2)\\  
   \midrule
   \emph{Variables}\\
   $p_a$            & 1.359$^{***}$   & 1.023$^{***}$\\   
   & (0.1402)        & (0.1400)\\   
   $\theta$         & -0.2409         & -0.7005$^{**}$\\   
   & (0.2757)        & (0.2754)\\   
   $\theta^2$  & 0.2035          & 0.6057$^{**}$\\   
   & (0.2840)        & (0.2837)\\   
   $\delta$         & -0.3333$^{***}$ & -0.2896$^{***}$\\   
   & (0.0635)        & (0.0634)\\   
   $\delta^2$  & 0.0626          & 0.0102\\   
   & (0.0858)        & (0.0857)\\   
   Constant         & 0.0089          & 0.2784$^{***}$\\   
                    & (0.0859)        & (0.0858)\\   
   \midrule
   \emph{Fit statistics}\\
   Observations     & 254             & 254\\  
   R$^2$            & 0.33790         & 0.24751 \\
   \midrule \midrule
   \multicolumn{3}{l}{\emph{Standard-errors in parentheses}}\\
   \multicolumn{3}{l}{\emph{Signif. Codes: ***: 0.01, **: 0.05, *: 0.1}}\\
\end{tabular}
\par\endgroup

            \end{adjustbox}
                \begin{minipage}{1\textwidth}
                    \begin{footnotesize}
                        \begin{singlespace}
                            \emph{Notes:} This table reports the OLS estimates of specification \ref{eq:prop_1_reg} with (Model (2)) and without the equal selection constraint  (Model (2)). The outcome variable is the proportion of women in the finalist pool, $p_h$. The independent variables are the proportion of women in the applicant pool, $p_a$, the correlation between screening and hiring manager scores, $\theta$, and the gender difference in correlation, $\delta$.
                        \end{singlespace}
                    \end{footnotesize}
                \end{minipage}
    \end{table}

    \begin{table}[t] 
    
        \centering
        \caption{Estimated effectiveness of the equal selection constraint}
        \label{tab:parametric_simulation_job_cat}
        \begin{adjustbox}{max width=\textwidth}
            \begin{threeparttable}
                \begin{tabular}{llrrr}
    \toprule
                    
    Job Category                        & Equal Selection & Applied $p_a$                                      & Screened $p_s$ & Hired $p_h$ \\
    \midrule
    Biz Dev \& Operations               & False             & 0.38                                          & 0.38      & 0.38     \\
                                        & True              & 0.38                                          & 0.50      & 0.45     \\
    \midrule
    Customer Service \& Acct Management & False             & 0.38                                          & 0.38      & 0.38     \\
                                        & True              & 0.38                                          & 0.50      & 0.47     \\
    \midrule
    Engineering \& Technical            & False             & 0.23                                          & 0.23      & 0.23     \\
                                        & True              & 0.23                                          & 0.50      & 0.36     \\
    \midrule
    Finance \& Accounting               & False             & 0.35                                          & 0.35      & 0.35     \\
                                        & True              & 0.35                                          & 0.50      & 0.45     \\
    \midrule
    HR                                  & False             & 0.41                                          & 0.41      & 0.41     \\
                                        & True              & 0.41                                          & 0.50      & 0.46     \\
    \midrule
    Legal \& PR                         & False             & 0.35                                          & 0.35      & 0.35     \\
                                        & True              & 0.35                                          & 0.50      & 0.44     \\
    \midrule
    Product \& Design                   & False             & 0.33                                          & 0.33      & 0.33     \\
                                        & True              & 0.33                                          & 0.50      & 0.43     \\
    \midrule
    Sales \& Marketing                  & False             & 0.36                                          & 0.36      & 0.36     \\
                                        & True              & 0.36                                          & 0.50      & 0.44     \\
    \midrule
    Overall                             & False             & 0.31                                          & 0.31      & 0.31     \\
                                        & True              & 0.31                                          & 0.50      & 0.41     \\
    \bottomrule
\end{tabular}

                \begin{tablenotes}[flushleft]
                    \footnotesize
                    \item \emph{Notes:} This table reports the proportion of women in the applicant pool $p_a$, shortlist $p_s$ and hired pool $p_h$ --- with and without the equal selection constraint. $p_a$ is observed in the data. $p_s$ and $p_h$ are estimated by first estimating the job-specific model parameters ($\hat{\theta}_j, \hat{\delta}_j$), and imputing the model parameters into the theoretical model. We estimate at the job posting level and aggregate it up to the job category level for all jobs with $p_a<0.5$ in the hold-out test set.
                \end{tablenotes}
            \end{threeparttable}
        \end{adjustbox}
    \end{table}

    \subsection{Benchmarking Screening Algorithms and Fairness Constraints}
    \label{ssec:benchmarking}
    
    We benchmark the equal selection constraint against other fairness criteria and our proposed complementary screening approach. 
    Specifically, we evaluate seven screening algorithms:

    \begin{enumerate}
    \singlespacing
    \item \textsc{No Constraint}: Baseline without fairness constraints.
    \item \textsc{Equal Selection}: Equal gender representation in the shortlist.
    \item \textsc{Demographic Parity}: Shortlist gender proportions match applicant pool proportions.
    \item \textsc{Error Rate Parity}: Equal error rates for men and women.\footnote{We use fairlearn's implementation \url{https://fairlearn.org/v0.10/user\_guide/mitigation/reductions.html}}
    \item \textsc{Equalized Odds}: Equal true and false positive rates across genders.
    \item \textsc{Equal Selection min $Q^S$ Diff}: Equal selection while minimizing gender differences in screening scores.
    \item \textsc{Complementary Equal Selection}: Equal selection while minimizing gender differences in hiring manager scores.
    \end{enumerate}

    \begin{table}[t]
        \centering
        \caption{Screening algorithms}
        \label{tab:screening_algos}
                \begin{threeparttable}
            \renewcommand{\arraystretch}{1.5}
            \begin{tabular}{m{0.4\textwidth}>{\centering\arraybackslash}m{0.55\textwidth}}
                \toprule
                \textbf{Screening Algorithm}             & \textbf{Constraint}                                                                                    \\
                \hline
                \textsc{No Constraint}                   & None                                                                                                   \\
                \hline
                \textsc{Equal Selection}                 & $\begin{aligned}
                                                                    \mathds{P}(g=f|\hat{y}^S=1) = \mathds{P}(g=m|\hat{y}^S=1)
                                                                \end{aligned}$                                              \\
                \hline
                \textsc{Demographic Parity}              & $\begin{aligned}
                                                                    \mathds{P}(\hat{y}^S|g=f) = \mathds{P}(\hat{y}^S|g=m)
                                                                \end{aligned}$                                                  \\
                \hline
                \textsc{Error Rate Parity}               & $\begin{aligned}
                                                                    \mathds{P}(\hat{y}^S \neq y^S|g=f) = \mathds{P}(\hat{y}^S \neq y^S|g=m)
                                                                \end{aligned}$                                \\
                \hline
                \textsc{Equalized Odds}                  & $\begin{aligned}
                                                                    \mathds{P}(\hat{y}^S =1|y^S, g=f) = \mathds{P}(\hat{y}^S =1|y^S, g=m), \\
                                                                    y^S \in \{0,1\}
                                                                \end{aligned}$                                \\
                \hline

                \textsc{Equal Selection min $Q^S$ Diff.} & $\begin{aligned}
                                                                    \mathds{P}(g=f|\hat{y}^S=1) = \mathds{P}(g=m|\hat{y}^S=1) \\
                                                                    \min \mathds{E}[\hat{q}^S_s|g=f] - \mathds{E}[\hat{q}^S_s|g=m]
                                                                \end{aligned}$ \\
                \hline
                \textsc{Complementary Equal Selection}   & $\begin{aligned}
                                                                    \mathds{P}(g=f|\hat{y}^S=1) = \mathds{P}(g=m|\hat{y}^S=1) \\
                                                                    \min \mathds{E}[\hat{q}^H_s|g=f] - \mathds{E}[\hat{q}^H_s|g=m]
                                                                \end{aligned}$ \\
                \bottomrule
            \end{tabular}
            \begin{tablenotes}
                \footnotesize
                \item\textit{Notes:} This table summarizes the screening algorithms used for benchmarking. $\hat{y}^S$ is the predicted screening outcome, and $y^S$ is the true screening outcome observed in the data. $\hat{q}^S_s$ is the predicted screening score of the shortlisted candidates, and $\hat{q}^H_s$ is the predicted hiring manager score of the shortlisted candidates. The primary objective of all the algorithms is to shortlist candidates with the highest screening score.
            \end{tablenotes}
        \end{threeparttable}
    \end{table}

    We assess the impact of each screening algorithm on both diversity and expected hire quality through agent-based hiring simulations. 
    In these simulations, ML models for screening and hiring manager evaluations serve as agents.
    The screening agent shortlists the candidates with the highest screening scores, $\hat{q}^S$, while satisfying the constraint outlined in \Cref{tab:screening_algos}, and the hiring manager agent selects the candidates with the highest hiring manager scores, $\hat{q}^H$.
    Unlike the theoretical model, these simulations do not assume identical quality distributions for men and women, unbiased hiring managers, or normal distributions, making them robust to potential real-world deviations.

    \paragraph{Simulating true quality $Q$.} To measure the expected hire quality (unobservable in actual data), we generate semi-synthetic quality scores for candidates based on different assumed values of $\theta^S$ and $\theta^H$, while holding the empirically estimated correlation $\theta$ fixed.\footnote{
        For each job we observe the vectors $\bm{q}_j^S$ and $\bm{q}_j^H$ , which fixes $\theta$.
        To generate $\bm{q}$, we first generate a random vector.
        We then orthogonalize it with respect to $\bm{q}_j^S$ and $\bm{q}_j^H$.
        We then transform the vector for a given value for $\theta^S$ and $\theta^H$.
        This produces a random vector $\bm{q}_j$ that has the defined correlation structure $\Sigma$.
    }
    We present the results from these simulations in \Cref{fig:algo_comparison}.

    \begin{figure}[t] 
        \caption{Quality and diversity of hires using different screening algorithms}
        \centering
        \includegraphics[width=\textwidth, keepaspectratio]{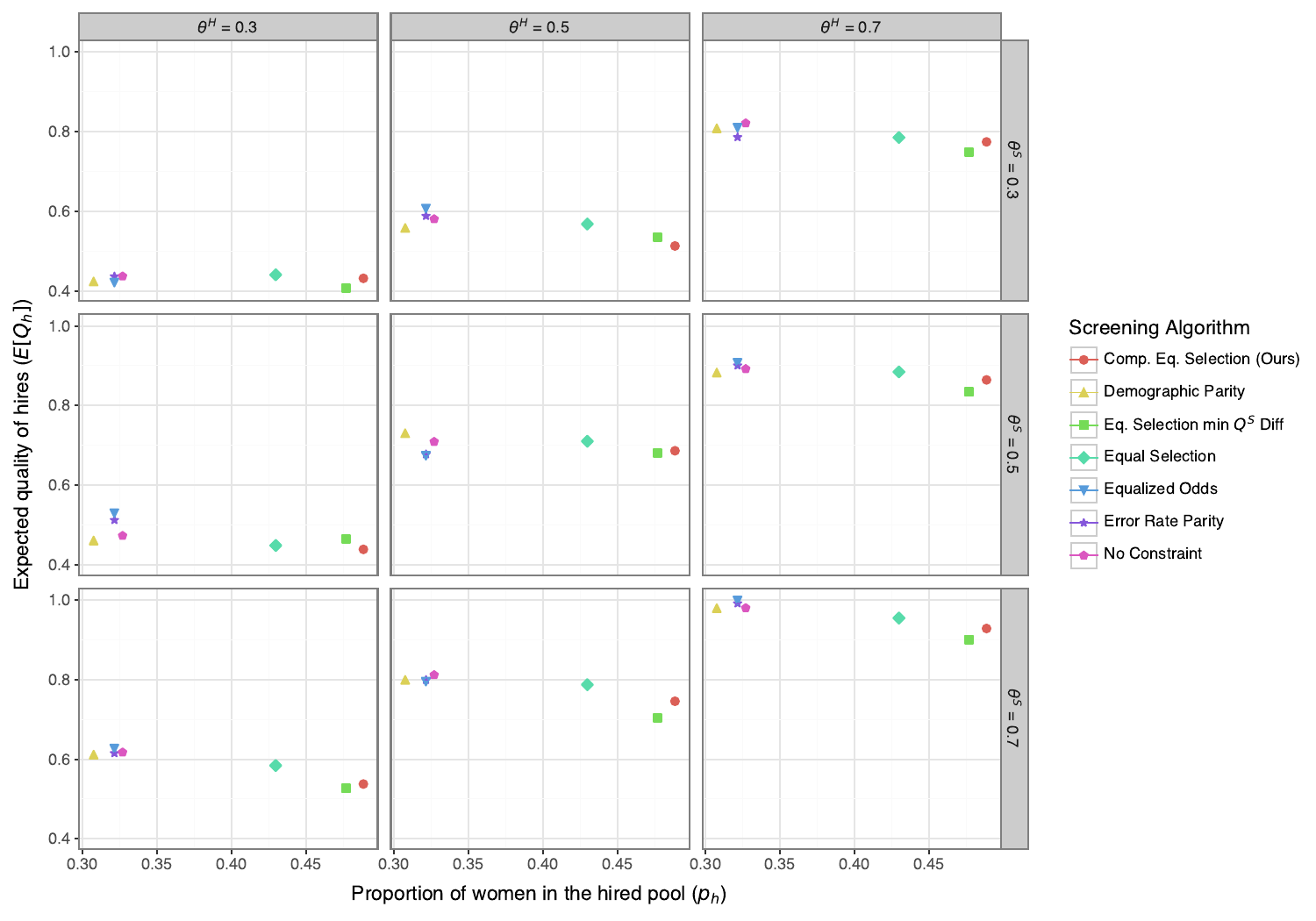}
        \label{fig:algo_comparison}
        \begin{minipage}{1\textwidth}
            \begin{footnotesize}
                \begin{singlespace}
                    \emph{Notes:}
                    This figure plots the expected quality of hires (y-axis) and the proportion of women in the hired pool (x-axis) for different screening algorithms using agent-based hiring simulation experiments. We use semi-synthetic data for true quality $Q$. Each grid in the facet corresponds to a different $\theta^S$ and $\theta^H$ value. Error bars in both the x and y axes represent bootstrapped 95\% 
                    Error bars are not visible because they are narrow.
                \end{singlespace}
            \end{footnotesize}
        \end{minipage}
    \end{figure}

    Our results demonstrate significant variability in the diversity outcomes among the algorithms:

    \begin{itemize}
    
    \item The \textsc{Complementary Equal Selection} algorithm consistently achieves the highest diversity of hires, surpassing all other constraints, including the \textsc{Equal Selection min $Q^S$ Diff.} This performance advantage arises because it directly aligns shortlisted candidates to be complementary to the hiring manager's evaluation criteria, ensuring greater diversity in the actual hires.
    
    \item Algorithms based on \textsc{Demographic Parity}, \textsc{Error Rate Parity}, and \textsc{Equalized Odds} do not substantially enhance diversity compared to the baseline. \textsc{Demographic Parity} actually reduces diversity since, empirically, women had higher shortlisting rates in our training data (see \Cref{apx:callback_regression}), a disparity eliminated by enforcing equal representation with the applicant pool. Likewise, \textsc{Error Rate Parity} and \textsc{Equalized Odds} have limited effect because our ML models exhibit minimal to no gender differences in error rates or ROC curves (as confirmed in \Cref{apx:callback_regression} and \Cref{apx:clf_models}).
    
    \item While the \textsc{Complementary Equal Selection} approach significantly improves diversity, it does so with minimal reduction in the expected quality of hires, particularly under lower values of $\theta^S$ and $\theta^H$.
    
    \end{itemize}

    \section{Discussion and Conclusion}
    \label{sec:discussion_conclusion}
    
    This paper examines the effectiveness of diversity policies implemented as algorithmic fairness constraints within Human+AI hiring systems. We develop a theoretical model of the hiring process, showing that the success of a common diversity policy—equal selection in the shortlist—is contingent upon key parameters such as the correlation between the screening algorithm's and the hiring manager's assessment criteria. Using real hiring data from technology firms, we empirically estimate these parameters and evaluate the impact of equal selection through counterfactual policy simulations.
    
    Our findings indicate that enforcing equal selection in shortlists modestly improves gender diversity among hires but does not achieve parity.
    Moreover, the effectiveness of this constraint varies significantly across job categories.
    To address these limitations, we propose a complementary screening algorithm, designed explicitly to differ from the hiring manager’s assessments, and demonstrate its superior performance in enhancing workforce diversity compared to traditional fairness constraints.
    
    We highlight several critical managerial and algorithmic design implications arising from our results:

    \begin{itemize}
    \item First, achieving gender parity at the shortlist stage does not inherently guarantee gender parity in final hires, even if hiring managers are gender-unbiased. Without this understanding, stakeholders may mistakenly interpret the post-shortlist disparities as biases introduced by hiring managers, undermining trust in the fairness policy.
    
    \item Second, the effectiveness of the equal selection constraint is highly job-specific, driven by the correlation between screener and hiring manager evaluations. Notably, technical roles requiring measurable ``hard skills'' (e.g., software engineering) tend to exhibit higher correlations, diminishing the effectiveness of equal selection precisely in fields where women are most underrepresented.
    
    \item Third, equal predictive accuracy of screening algorithms across genders is insufficient in multi-stage hiring processes. It is equally important for screening algorithms to maintain gender neutrality concerning their alignment with hiring managers’ criteria—specifically, algorithms should exhibit no gender differences in their correlation with managerial assessments ($\delta = 0$).
    
    \item Lastly, we show theoretically that higher correlations between screeners' and hiring managers' assessments not only reduce the effectiveness of equal selection constraints but also negatively affect the expected quality of hires. This suggests a critical design insight: screening algorithms should be constructed to complement, rather than replicate, managerial evaluations.
    \end{itemize}
    
    \subsubsection*{Limitations and Future Directions}
    
    Our recommendation—that screening algorithms should complement hiring managers’ assessments-may encounter practical organizational challenges. Hiring managers often perceive AI tools as substitutes to replicate their decision-making processes. Previous literature has identified similar tensions in algorithmic hiring contexts \parencite{van_den_broek_when_2021}. Future research could explore strategies for effectively integrating AI tools explicitly designed to complement, rather than duplicate, human judgment.
    
    Another limitation of our model is the assumption that hiring managers do not adapt their decision-making in response to diversity constraints. Future studies should investigate whether introducing fairness constraints could inadvertently induce biases among previously unbiased hiring managers. Existing psychology and management literature highlights that affirmative action programs (AAPs) can lead to stigma and stereotyping of minority candidates, even those not directly benefiting from AAPs \parencite{heilman_affirmative_1997,leslie_stigma_2013}. Given that algorithmic fairness constraints might be perceived similarly to AAPs, further research is needed to understand such policies' potential psychological and organizational impacts.

    \singlespacing
    \printbibliography
    
    \section*{Appendix}
    \doublespacing
    \appendix
    
\section{Proofs}
\label{apx:proofs}

\newcommand{\ph}{\phi}
\newcommand{\bPh}{\overline{\Phi}}
\newcommand{\bPhTwo}{\overline{\Phi}_2}

\subsection{Setup, Definitions, and Key Identities}
\label{apx:preliminaries}
We use the following definitions and identities throughout the proofs.

\paragraph{Setup} 
Applicants are evaluated using the screening score ($Q^S$) in the first stage, and the hiring score ($Q^H$) in the second.
    Unless otherwise specified, we assume that $(\delta, \delta^S, \delta^H, \alpha, \beta^S, \beta^H) = 0$; the distribution of the scores are thus the same for men and women before any selection.
    \begin{align}
        (Q, Q^S, Q^H) &\sim \mathcal{N} \left( \begin{bmatrix} 0 & 0 & 0 \end{bmatrix}, \begin{bmatrix}
        1          & \theta^S & \theta^H \\
        \theta^S & 1          & \theta   \\
        \theta^H & \theta   & 1
        \end{bmatrix} \right)
        \end{align}

\paragraph{Definitions}
\begin{enumerate}
    \item $\phi(x)$ is the standard normal Probability Density Function (PDF). $Q, Q^S, Q^H$ have marginal PDFs $\phi(q), \phi(q^S), \phi(q^H)$.
    \item $\Phi(x)$ is the standard normal Cumulative Distribution Function (CDF). $Q, Q^S, Q^H$ have marginal CDFs $\Phi(q), \Phi(q^S), \Phi(q^H)$.
    \item $\phi_2(x, y; \rho)$ is the standard bivariate normal PDF evaluated at $(x,y)$ with correlation $\rho$. $Q^S, Q^H$ have joint PDF $\phi_2(q^S, q^H; \theta)$.
    \item $\Phi_2(x, y; \rho)$ is the standard bivariate normal CDF, representing $P(X \le x, Y \le y)$ where $(X, Y)$ have standard normal marginal distributions with correlation $\rho$. $Q^S, Q^H$ have joint CDF $\Phi_2(q^S, q^H; \theta)$.
    \item $\bPh_2(x, y; \rho)$ is the complement of the standard bivariate normal CDF (tail distribution), representing $P(X > x, Y > y)$ where $(X, Y)$ have standard normal marginal distributions with correlation $\rho$. The probability that a candidate is hired is therefore given by $\bPh_2(\tau^S, \tau^H; \theta)$.
\end{enumerate}
\paragraph{Key Identities}
\begin{enumerate}
    \item Plackett's Identity for the derivative of the bivariate normal CDF with respect to correlation (see \textcite{tong2012multivariate}):
    $$ \frac{\partial}{\partial \rho} \Phi_2(a, b; \rho) = \phi_2(a, b; \rho) $$
    \item Property of the bivariate normal PDF:
    $$ \phi_2(-x, -y; \rho) = \phi_2(x, y; \rho) $$
    This is because the exponent $-\frac{(-x)^2 - 2\rho (-x)(-y) + (-y)^2}{2(1-\rho^2)} = -\frac{x^2 - 2\rho xy + y^2}{2(1-\rho^2)}$ is unchanged.
    \item Relation between univariate and bivariate normal PDFs:
    $$ \phi(x) \phi\left(\frac{y - \rho x}{\sqrt{1-\rho^2}}\right) = \sqrt{1-\rho^2} \phi_2(x, y; \rho) $$
    And similarly, by symmetry of $\phi_2(x,y;\rho) = \phi_2(y,x;\rho)$:
    $$ \phi(y) \phi\left(\frac{x - \rho y}{\sqrt{1-\rho^2}}\right) = \sqrt{1-\rho^2} \phi_2(x, y; \rho) $$
    \item Symmetry of the univariate normal PDF: $\phi(-x) = \phi(x)$.
    \item Symmetry relating CDF and tail distribution: $\bPh_2(x, y; \rho) = \Phi_2(-x, -y; \rho)$.
\end{enumerate}

\subsection{Proof of Proposition 1}
\label{sec:proof_theta_vs_ph}

\begin{proposition*}
    \cmdphVsTheta{}
\end{proposition*}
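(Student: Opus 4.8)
The plan is to reduce the two-variable monotonicity claim $\partial p_h/\partial\theta<0$ to a one-dimensional monotonicity lemma about the bivariate normal tail, and then to prove that lemma. First I would write $p_h$ explicitly. Under the maintained assumptions the score distributions are identical across genders, so the only gender asymmetry is the shortlist threshold; equal selection forces equally many men and women onto the shortlist, and since $p_a<0.5$ this requires $\tau^S_f<\tau^S_m$. A shortlisted candidate with threshold $\tau^S_g$ is hired exactly when $Q^H>\tau^H$, which, conditional on being shortlisted ($Q^S>\tau^S_g$), occurs with probability $\bPhTwo(\tau^S_g,\tau^H;\theta)/\bPh(\tau^S_g)$. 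Because the shortlist contains equally many of each gender, the female share of hires is
\[ p_h=\frac{h(\tau^S_f)}{h(\tau^S_f)+h(\tau^S_m)},\qquad h(t):=\frac{\bPhTwo(t,\tau^H;\theta)}{\bPh(t)}. \]

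Second, I would reduce the $\theta$-derivative to a sign comparison. Writing $p_h=\bigl(1+h(\tau^S_m)/h(\tau^S_f)\bigr)^{-1}$, the statement ``$p_h$ decreasing in $\theta$'' is equivalent to ``$\log h(\tau^S_m)-\log h(\tau^S_f)$ increasing in $\theta$.'' The thresholds $\tau^S_m,\tau^S_f$ depend only on $p_a$ and the marginal of $Q^S$, hence not on $\theta$, and $\bPh(t)$ is $\theta$-free; so using $\bPhTwo(x,y;\theta)=\Phi_2(-x,-y;\theta)$, Plackett's identity, and $\phi_2(-x,-y;\theta)=\phi_2(x,y;\theta)$ to obtain $\partial_\theta\bPhTwo=\phi_2$, I get $\partial_\theta\log h(t)=\phi_2(t,\tau^H;\theta)/\bPhTwo(t,\tau^H;\theta)=:g(t)$. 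Since $\tau^S_m>\tau^S_f$, it therefore suffices to show that $g(t)$ is strictly increasing in $t$.

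Third---the crux---I would prove the monotonicity lemma $g'(t)>0$. Differentiating $g$ and using $\partial_t\phi_2(t,\tau^H;\theta)=-\tfrac{t-\theta\tau^H}{1-\theta^2}\phi_2(t,\tau^H;\theta)$ together with $\partial_t\bPhTwo(t,\tau^H;\theta)=-\int_{\tau^H}^\infty\phi_2(t,y;\theta)\,dy$, the sign of $g'(t)$ equals the sign of
\[ \int_{\tau^H}^\infty\phi_2(t,y;\theta)\,dy-\frac{t-\theta\tau^H}{1-\theta^2}\,\bPhTwo(t,\tau^H;\theta), \]
a bivariate analogue of the Mills-ratio inequality. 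I expect this to be the main obstacle. I would prove it by analyzing the bracket $B(t)$: it vanishes as $t\to\infty$, and using the PDF identity $\ph(t)\,\ph\!\bigl((\tau^H-\theta t)/\sqrt{1-\theta^2}\bigr)=\sqrt{1-\theta^2}\,\phi_2(t,\tau^H;\theta)$ one can simplify $B'(t)$ to a sign-definite form; the degenerate case $\theta=0$ collapses to the classical inequality $\ph(t)>t\,\bPh(t)$ (the increasing hazard rate of the standard normal), which both sanity-checks the reduction and anchors the argument. An alternative route is to invoke log-concavity of $\bPhTwo$ (Prékopa's theorem, as it is an orthant integral of the log-concave Gaussian density) to control the relevant second-order behaviour.

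Finally I would note consistency with the corollary: at $\theta=0$, $h(t)=\bPh(\tau^H)$ is constant so $p_h=\tfrac12$, whereas as $\theta\to1$ the screening stage becomes vacuous ($Q^H=Q^S$) and gender-neutral selection at $\tau^H$ returns $p_h=p_a$; the monotone decrease interpolates between these endpoints.
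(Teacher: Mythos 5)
Your setup and reduction are correct, and they are in substance identical to the paper's: since the number of female hires is proportional to $\overline{\Phi}_2(\tau^S_f,\tau^H;\theta)$ (the bivariate normal upper-orthant probability) and the equal-selection constraint makes the shortlist counts cancel, the paper's comparison $N'(\theta)/N(\theta) < M'(\theta)/M(\theta)$ is exactly your inequality $g(\tau^S_f) < g(\tau^S_m)$ with $g(t) = \partial_\theta \log \overline{\Phi}_2(t,\tau^H;\theta) = \phi_2(t,\tau^H;\theta)/\overline{\Phi}_2(t,\tau^H;\theta)$, obtained via the same Plackett identity the paper uses elsewhere. Your derivation is in fact cleaner than the paper's, and you have correctly isolated the single inequality on which everything rides, namely $B(t) = \int_{\tau^H}^\infty \phi_2(t,y;\theta)\,dy - \frac{t-\theta\tau^H}{1-\theta^2}\,\overline{\Phi}_2(t,\tau^H;\theta) > 0$.

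The gap is that this key lemma is false as a global statement: $g$ is \emph{not} increasing on all of $\mathbb{R}$ when $0<\theta<1$, so the crux of your plan cannot be closed. Concretely, as $t\to\infty$ the first term of $B(t)$ is $\phi(t)\,\overline{\Phi}\bigl((\tau^H-\theta t)/\sqrt{1-\theta^2}\bigr)\sim\phi(t)$, while $\overline{\Phi}_2(t,\tau^H;\theta)\sim\overline{\Phi}(t)\sim\phi(t)/t$, so $B(t)\sim\phi(t)\bigl(1-\tfrac{1}{1-\theta^2}\bigr)=-\phi(t)\tfrac{\theta^2}{1-\theta^2}<0$: the bivariate Mills inequality flips in the upper tail the moment $\theta>0$ (equivalently, $g>0$ with $g\to 0$ at both $\pm\infty$, so $g$ rises and then falls). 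Your $\theta=0$ sanity check passes precisely because that is the boundary case. The log-concavity fallback is also quantitatively too weak: Pr\'ekopa gives a hazard-rate lower bound $\lambda(t)\ge t-\tfrac{\theta}{\sqrt{1-\theta^2}}\tfrac{\phi(u)}{\overline{\Phi}(u)}$ with $u=(\tau^H-\theta t)/\sqrt{1-\theta^2}$, whereas $B(t)>0$ requires $\lambda(t)> t-\tfrac{\theta}{\sqrt{1-\theta^2}}\,u$, and the univariate Mills inequality $\phi(u)>u\,\overline{\Phi}(u)$ says the former bound is strictly weaker than what you need. To be fair, the paper's own proof founders at the same spot: its Step 5 assertion that the ``boost'' $\phi\bigl((\theta q^S-\tau^H)/\sqrt{1-\theta^2}\bigr)(q^S-\theta\tau^H)/(1-\theta^2)^{3/2}$ increases in $q^S$ fails for large $q^S$ (it tends to $0$), and the passage from a higher \emph{average} boost to a higher \emph{relative} growth rate is unjustified. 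Your analysis thus exposes a genuine restriction on the proposition itself: for instance, if $\tau^H<\tau^S_f$ then $p_h=\tfrac12$ at both $\theta=0$ and $\theta\to 1$ (every shortlisted candidate is hired in the latter limit, not $p_h=p_a$ as your closing remark assumes), so $p_h$ is non-monotone on $[0,1)$ and the claimed decrease can only hold under conditions on $(\tau^S_f,\tau^S_m,\tau^H)$ that keep the thresholds inside the region where $B>0$, i.e.\ where the hiring threshold binds.
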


\begin{proof}
The goal is to show that the derivative of female proportion of hires, $p_h$, with respect to $\theta$ is negative for $\theta \in [0, 1)$ --- i.e., $\frac{\partial p_h}{\partial\theta} < 0$ for $\theta \in [0, 1)$.

\begin{enumerate}[wide, labelwidth=!, labelindent=0pt, leftmargin=*]
    \item \textbf{Define Female Proportion of Hires, $p_h(\theta)$.} 
    
    Let $N(\theta) = \Pr(\text{female is hired})$ and $M(\theta) = \Pr(\text{male is hired})$. Then:
    \[ p_h(\theta) = \frac{N(\theta)}{N(\theta) + M(\theta)} \]
    \begin{align*}
        N(\theta) &= p_a \int_{\tau_f^S}^{\infty} \Pr(Q^H > \tau^H | Q^S=q^S, \theta) f_{Q^S}(q^S) dq^S \\
        M(\theta) &= (1-p_a) \int_{\tau_m^S}^{\infty} \Pr(Q^H > \tau^H | Q^S=q^S, \theta) f_{Q^S}(q^S) dq^S
    \end{align*}
    
    where $p_a$ is the proportion of women in the applicant pool ($p_a < 0.5$); $\tau_f^S$ and $\tau_m^S$ are the screening thresholds for women and men, respectively; and $\tau^H$ is the hiring threshold.
    $f_{Q^S} = \phi(q^S)$ is the standard normal PDF.
    
    \item \textbf{Shortlisting Thresholds under Equal Selection.}
    Under the \emph{equal selection} constraint, the shortlisting thresholds $\tau_f^S$ for women and $\tau_m^S$ for men are adjusted such that the number of shortlisted women equals the number of shortlisted men:
    \[ p_a \Pr(Q^S > \tau_f^S | \text{female}) = (1-p_a) \Pr(Q^S > \tau_m^S | \text{male}) \]
    Let $F_{Q^S}$ be the CDF of $Q^S$. Then:
    \[ p_a (1 - F_{Q^S}(\tau_f^S)) = (1-p_a) (1 - F_{Q^S}(\tau_m^S)) \]
    Given $p_a < 0.5$, then $1 - F_{Q^S}(\tau_f^S) > 1 - F_{Q^S}(\tau_m^S)$, which means $\tau_f^S < \tau_m^S$. Women face a lower bar for shortlisting.
    In more explicit terms:
    \begin{equation}
    \label{equ:tau_f}
    \tau_f^S = F_{Q^S}^{-1} \left( 1 - \frac{1-p_a}{p_a} (1 - F_{Q^S}(\tau_m^S)) \right)
    \end{equation}

    \item \textbf{Conditional Hiring Probability.} The conditional distribution of hiring scores is $Q^H | Q^S=q^S \sim \mathcal{N}(\theta q^S, 1-\theta^2)$. 
    The probability that a candidate is hired given their screening score is:
    \[ \Pr(Q^H > \tau^H | Q^S=q^S, \theta) = 1 - \Phi\left(\frac{\tau^H - \theta q^S}{\sqrt{1 - \theta^2}}\right) = \Phi\left(\frac{\theta q^S - \tau^H}{\sqrt{1 - \theta^2}}\right) \]


    \item \textbf{Derivative of Conditional Hiring Probability.} Let $P(H|q^S, \theta) = \Pr(Q^H > \tau^H | Q^S=q^S, \theta)$.
    \[ \frac{\partial P(H|q^S, \theta)}{\partial \theta} = \phi\left(\frac{\theta q^S - \tau^H}{\sqrt{1 - \theta^2}}\right) \cdot \frac{d}{d\theta}\left(\frac{\theta q^S - \tau^H}{\sqrt{1 - \theta^2}}\right) \]
    Calculating the derivative of the argument:
    \[ \frac{d}{d\theta}\left(\frac{\theta q^S - \tau^H}{\sqrt{1 - \theta^2}}\right) = \frac{q^S\sqrt{1 - \theta^2} - (\theta q^S - \tau^H) \frac{- \theta}{\sqrt{1 - \theta^2}}}{1 - \theta^2} = \frac{q^S(1 - \theta^2) + \theta(\theta q^S - \tau^H)}{(1 - \theta^2)^{3/2}} = \frac{q^S - \theta \tau^H}{(1 - \theta^2)^{3/2}} \]
    So,
    \[ \frac{\partial P(H|q^S, \theta)}{\partial \theta} = \phi\left(\frac{\theta q^S - \tau^H}{\sqrt{1 - \theta^2}}\right) \frac{q^S - \theta \tau^H}{(1 - \theta^2)^{3/2}} \]
    Since $\phi(\cdot) > 0$ and $(1 - \theta^2)^{3/2} > 0$ for $\theta \in [0, 1)$, the sign depends on $(q^S - \theta \tau^H)$. The derivative is larger for larger values of $q^S$.

    \item \textbf{Derivative of $p_h(\theta)$.} Using the quotient rule, $\frac{d p_h}{d \theta}$ has the same sign as $N'(\theta)(N(\theta)+M(\theta)) - N(\theta)(N'(\theta)+M'(\theta)) = N'(\theta)M(\theta) - N(\theta)M'(\theta)$.
    We want to show this is negative, which is equivalent to showing:
    \[ \frac{N'(\theta)}{N(\theta)} < \frac{M'(\theta)}{M(\theta)} \]
    This means the relative rate of change of the hiring probability with respect to $\theta$ is smaller for women than for men.

    \item \textbf{Connecting to Thresholds.}
    The goal is to understand how the difference in thresholds ($\tau_f^S < \tau_m^S$) interacts with the derivative of the conditional hiring probability to make $p_h(\theta)$ decrease as $\theta$ increases.
    \begin{align*}
        N'(\theta) &= p_a \int_{\tau_f^S}^{\infty} \frac{\partial P(H|q^S, \theta)}{\partial \theta} \phi(q^S) dq^S \\
        M'(\theta) &= (1-p_a) \int_{\tau_m^S}^{\infty} \frac{\partial P(H|q^S, \theta)}{\partial \theta} \phi(q^S) dq^S
    \end{align*}
    Define the ``boost function" as $g(q^S, \theta) = \frac{\partial P(H|q^S, \theta)}{\partial \theta}$. This function represents how much the conditional hiring probability increases for a small increase in correlation $\theta$, given a screening score $q^S$. 
    As shown before (Step 5 of the proof), $g(q^S, \theta)$ increases with $q^S$ (i.e., $\frac{\partial g}{\partial q^S} > 0$).

    Now, compare the integrals for $N'(\theta)$ and $M'(\theta)$:
    \begin{itemize}
        \item $N'(\theta)$ involves integrating the boost function $g(q^S, \theta)$ over the range $q^S \in [\tau_f^S, \infty)$.
        \item $M'(\theta)$ involves integrating the same boost function $g(q^S, \theta)$ over the range $q^S \in [\tau_m^S, \infty)$.
        \item Since $p_a < 0.5$, we have $\tau_f^S < \tau_m^S$. The integration range for men is shifted to include only higher screening scores compared to the range for women.
        \item Because the boost function $g(q^S, \theta)$ is larger for higher $q^S$, the \emph{average value} of the boost function over the men's integration range $[\tau_m^S, \infty)$ will be greater than its average value over the women's integration range $[\tau_f^S, \infty)$. Let $\bar{g}_f(\theta)$ and $\bar{g}_m(\theta)$ represent these average boosts for shortlisted women and men, respectively. Then $\bar{g}_f(\theta) < \bar{g}_m(\theta)$.
        \item The overall derivatives $N'(\theta)$ and $M'(\theta)$ are related to these average boosts multiplied by the respective probabilities of being shortlisted. Specifically, the average boost experienced by shortlisted men ($\bar{g}_m$) is higher than that for women ($\bar{g}_f$).
    \end{itemize}
    As $\theta$ increases, the hiring probability for men increases relatively more strongly. That is, the higher average boost $\bar{g}_m$ leads to $\frac{M'(\theta)}{M(\theta)}$ being larger than $\frac{N'(\theta)}{N(\theta)}$.

    Since the men's hiring probability increases relatively faster (or decreases slower) than the women's hiring probability as $\theta$ increases, the proportion of women in the hired pool, $p_h(\theta) = \frac{N(\theta)}{N(\theta) + M(\theta)}$, must decrease. 
    
    \item \textbf{Conclusion.} Because the relative increase in hiring probability with $\theta$ is greater for men than for women (when $p_a < 0.5$ and $\tau_f^S < \tau_m^S$), the proportion of women in the hired pool, $p_h(\theta)$, decreases as $\theta$ increases from 0 towards 1.
\end{enumerate}

\end{proof}

\subsection{Proof for Proposition 2}
\label{sec:proof_delta_vs_ph}
\begin{proposition*}
    \cmdphVsDelta{}
\end{proposition*}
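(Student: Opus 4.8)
The plan is to reduce the claim to a statement about a single bivariate-normal tail probability and then invoke Plackett's identity. Write $p_h(\delta) = N(\delta)/(N(\delta)+M(\delta))$, where $N$ and $M$ denote the (unnormalized) probabilities that a female and a male, respectively, are hired. Because only the female block of the covariance matrix carries $\delta$---the female screening--hiring correlation is $\theta_f = \theta - \delta$ while the male correlation stays $\theta_m = \theta$---the male hiring probability $M$ is independent of $\delta$, so $M'(\delta) = 0$. The quotient rule then gives $\frac{\partial p_h}{\partial \delta} = \frac{M(\delta)\,N'(\delta)}{(N(\delta)+M(\delta))^2}$, and since $M > 0$ the sign of $\frac{\partial p_h}{\partial \delta}$ is exactly the sign of $N'(\delta)$. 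It therefore suffices to show $N'(\delta) < 0$.

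The key step is to recognize $N$ as a bivariate tail probability. A female is hired precisely when $Q_f^S > \tau_f^S$ and $Q_f^H > \tau^H$, and $(Q_f^S, Q_f^H)$ is standard bivariate normal with correlation $\theta_f = \theta - \delta$ (the means and variances are unaffected, since $\alpha=\beta^S=\beta^H=0$). Hence $N(\delta) = p_a\,\bPh_2(\tau_f^S, \tau^H; \theta - \delta)$. A point worth flagging is that $\tau_f^S$ itself does not depend on $\delta$: the equal-selection threshold is pinned down by the marginal law of $Q^S$ (standard normal for both genders) and the applicant proportion $p_a$ via \Cref{equ:tau_f}, none of which involve the correlation. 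The same observation holds without the equal-selection constraint, where $\tau_f^S=\tau_m^S=\tau^S$ is a common threshold, so the argument covers both cases at once.

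Differentiating and using the tail/CDF symmetry $\bPh_2(x,y;\rho)=\Phi_2(-x,-y;\rho)$, together with Plackett's identity $\partial_\rho \Phi_2(a,b;\rho)=\phi_2(a,b;\rho)$ and the evenness $\phi_2(-x,-y;\rho)=\phi_2(x,y;\rho)$, I obtain
\[
\frac{\partial}{\partial \theta_f}\,\bPh_2(\tau_f^S,\tau^H;\theta_f) = \phi_2(\tau_f^S,\tau^H;\theta_f) > 0 ,
\]
so by the chain rule $N'(\delta) = -p_a\,\phi_2(\tau_f^S,\tau^H;\theta-\delta) < 0$. Combined with the first paragraph this yields $\frac{\partial p_h}{\partial\delta} < 0$, as claimed. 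Intuitively, shrinking $\theta_f$ decouples the female hiring score from the female screening score, lowering the joint probability that both clear their thresholds.

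I do not anticipate a serious obstacle; the computation is short once $N$ is written as a bivariate tail probability. The one place to be careful is justifying that $\tau_f^S$ is $\delta$-independent---otherwise an additional boundary term would enter $N'(\delta)$---which follows because $\delta$ perturbs only the off-diagonal correlation and leaves every marginal, and hence every selection threshold, unchanged. A secondary check is that the result holds both with and without the equal-selection constraint; since the only difference between those cases is the numerical value of the threshold (gender-specific versus common), and the Plackett-identity argument is agnostic to that value, both cases follow identically.
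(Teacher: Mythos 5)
Your proof is correct and takes essentially the same route as the paper: both rest on Plackett's identity showing $\bar{\Phi}_2(\tau^S_f,\tau^H;\rho)$ is increasing in the correlation, so the female hire probability (correlation $\theta-\delta$) falls with $\delta$ while the male probability is unchanged. Your two additions---the quotient-rule step making explicit that $M'(\delta)=0$ reduces everything to the sign of $N'(\delta)$, and the check that $\tau^S_f$ is $\delta$-independent because $\delta$ perturbs only the off-diagonal correlation and not the marginals---are details the paper's terser proof leaves implicit, and they are handled correctly.
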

\begin{proof}
    The proof directly comes Plackett's identity, which is used to show a known result that for a bivariate normal distribution with correlation $\rho$, $\Phi_2(x,y;\rho)$ is increasing in $\rho$ (see \textcite{tong2012multivariate}).
    
    The probability that a candidate is hired is: 
    \[P(Q^H > \tau^H, Q^S > \tau^S) = \bPhTwo(\tau^H,\tau^S;\rho)\ = \Phi_2(-\tau^H,-\tau^S;\rho)\]
    where $\bPhTwo(\tau^H,\tau^S;\rho)$ is the complementary joint standard normal CDF of $(Q^S, Q^H)$ with correlation $\rho$.
    By symmetry, we have $\bPhTwo(\tau^H,\tau^S;\rho)\ = \Phi_2(-\tau^H,-\tau^S;\rho)$
    
    Here, $\rho_{male} = \theta$ and $\rho_{female} = \theta-\delta$.
    Since $\theta > \theta-\delta$, the probability that a female is hired decreases with $\delta$.
    Consequently, the proportion of women in the hired pool decreases with $\delta$.
\end{proof}

\subsection{Proof for Proposition 3}
\label{sec:proof_theta_vs_eq}
\begin{proposition*}
    \cmdEQvsTheta{}
\end{proposition*}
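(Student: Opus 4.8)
The plan is to obtain a closed form for $E[Q_h] = E[Q \mid Q^S > \tau^S,\ Q^H > \tau^H]$ and differentiate it in $\theta$. First I would exploit joint normality: since $E[Q \mid Q^S = s, Q^H = h] = a\,s + b\,h$ with $a = \frac{\theta^S - \theta\theta^H}{1-\theta^2}$ and $b = \frac{\theta^H - \theta\theta^S}{1-\theta^2}$, and since the hiring event depends only on $(Q^S,Q^H)$, the tower property gives $E[Q_h] = a\,\mu_S + b\,\mu_H$, where $\mu_S,\mu_H$ are the means of $Q^S,Q^H$ truncated to the hiring region. Equivalently, one line of Gaussian integration by parts (Stein's identity) yields the cleaner expression
\[
E[Q_h] = \frac{\theta^S A + \theta^H B}{\overline{\Phi}_2(\tau^S,\tau^H;\theta)}, \qquad A = \phi(\tau^S)\,\overline{\Phi}\!\Big(\tfrac{\tau^H-\theta\tau^S}{\sqrt{1-\theta^2}}\Big),\quad B = \phi(\tau^H)\,\overline{\Phi}\!\Big(\tfrac{\tau^S-\theta\tau^H}{\sqrt{1-\theta^2}}\Big),
\]
the two representations being reconciled by the identities $a + \theta b = \theta^S$ and $b + \theta a = \theta^H$, which I would verify by direct substitution.

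Next I would differentiate. Writing $N = \theta^S A + \theta^H B$ and $D = \overline{\Phi}_2(\tau^S,\tau^H;\theta)$, Plackett's identity together with the symmetry $\phi_2(-x,-y;\theta)=\phi_2(x,y;\theta)$ gives $D' = \phi_2(\tau^S,\tau^H;\theta)$, and the PDF identity $\phi(x)\phi\!\big(\tfrac{y-\theta x}{\sqrt{1-\theta^2}}\big) = \sqrt{1-\theta^2}\,\phi_2(x,y;\theta)$ reduces both $A'$ and $B'$ to multiples of $\phi_2(\tau^S,\tau^H;\theta)$. The upshot is $N' = \phi_2(\tau^S,\tau^H;\theta)\,C$ with $C = \frac{\theta^S(\tau^S - \theta\tau^H) + \theta^H(\tau^H - \theta\tau^S)}{1-\theta^2}$, so the sign of $\frac{d}{d\theta}E[Q_h] = \frac{N'D - N D'}{D^2}$ equals the sign of $CD - N$.

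The crux is the algebraic collapse of $N - CD$. Inverting the two relations $\mu_S D = A+\theta B$, $\mu_H D = B + \theta A$ gives $A = \frac{(\mu_S - \theta\mu_H)D}{1-\theta^2}$ and $B = \frac{(\mu_H - \theta\mu_S)D}{1-\theta^2}$; substituting these and regrouping via $\theta^S - \theta\theta^H = (1-\theta^2)a$ and $\theta^H - \theta\theta^S = (1-\theta^2)b$, I expect everything to telescope to
\[
N - CD = D\big(a\,\Delta_S + b\,\Delta_H\big), \qquad \Delta_S \coloneqq \mu_S - \tau^S,\ \ \Delta_H \coloneqq \mu_H - \tau^H,
\]
so that $\frac{d}{d\theta}E[Q_h] = -\frac{\phi_2(\tau^S,\tau^H;\theta)}{D}\,(a\Delta_S + b\Delta_H)$. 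Making these cancellations go through cleanly is the main obstacle, and I would guard against sign errors by re-deriving them through the $a+\theta b=\theta^S$, $b+\theta a=\theta^H$ relations.

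Finally I would check the two sign facts. The truncated means exceed their thresholds, $\Delta_S>0$ and $\Delta_H>0$: conditioning on $Q^S>\tau^S$ alone forces $\mu_S>\tau^S$, and the extra conditioning on $Q^H>\tau^H$ only raises it further because $(Q^S,Q^H)$ are positively associated for $\theta\ge 0$ (and symmetrically for $\mu_H$). The weights satisfy $a \ge 0 \iff \theta \le \theta^S/\theta^H$ and $b \ge 0 \iff \theta \le \theta^H/\theta^S$, so both are nonnegative precisely on $\theta \in [0,\min\{\theta^S/\theta^H,\theta^H/\theta^S\}]$, which is exactly the stated interval and has the natural reading that it is the regime where both signals carry nonnegative regression weight. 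Hence $a\Delta_S + b\Delta_H > 0$ throughout, giving $\frac{d}{d\theta}E[Q_h] < 0$; $E[Q_h]$ is therefore strictly decreasing on the interval and attains its maximum at the left endpoint $\theta = 0$.
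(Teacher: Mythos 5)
Your monotonicity argument is correct and tracks the paper's own Part 1 almost step for step: the same Tallis closed form (your Stein-identity derivation is an equivalent route to it), Plackett's identity for $D'$, the same PDF identity collapsing $N'$ to $\phi_2(\tau^S,\tau^H;\theta)\,C$, and the same final sign decomposition --- your $C$ is exactly the paper's corner expectation $E[Q \mid Q^S=\tau^S, Q^H=\tau^H]$, and your identity $E[Q_h]-C = a\,\Delta_S + b\,\Delta_H$ with $a,b\ge 0$ precisely on $\theta\in[0,\min\{\theta^S/\theta^H,\theta^H/\theta^S\}]$ is the same condition the paper obtains by differentiating the corner expectation in $\tau^S$ and $\tau^H$ (those partials are your $a$ and $b$). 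Your positive-association justification of $\Delta_S,\Delta_H>0$ for $\theta\ge 0$ is also sound.

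The genuine gap is the final clause of the proposition: hire quality reaching a \emph{global} maximum at $\theta=0$. Your argument only controls the sign of the derivative on $[0,\min\{\theta^S/\theta^H,\theta^H/\theta^S\}]$; outside that interval one of the weights $a,b$ turns negative, the sign of $a\,\Delta_S+b\,\Delta_H$ is no longer pinned down, and $E[Q_h]$ may rise again --- so ``maximum at the left endpoint of the interval'' does not deliver ``global maximum over $\theta\in[0,1)$.'' The paper devotes a separate Part 2 to exactly this: it compares $E[Q_h]$ at $\theta=0$ directly with its value at an arbitrary $\theta\in(0,1)$, writing the difference as $\theta^H\phi(\tau^H)D_H(\theta)+\theta^S\phi(\tau^S)D_S(\theta)$ and proving $D_H(\theta),D_S(\theta)>0$ via the observation that $P(Q^S>\tau^S \mid Q^H>\tau^H) > P(Q^S>\tau^S \mid Q^H=\tau^H)$, which holds because $q^H \mapsto P(Q^S>\tau^S \mid Q^H=q^H) = \overline{\Phi}\bigl((\tau^S-\theta q^H)/\sqrt{1-\theta^2}\bigr)$ is strictly increasing when $\theta>0$. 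You need this direct comparison (or an equivalent) to close the global-maximum claim; it does not follow from your differential analysis. A further one-line omission: since the proposition is invoked under equal selection, where $\tau^S$ is gender-specific, the paper also notes that overall hire quality is the $p_h$-weighted average of the two groups' conditional expectations, to each of which the argument applies with its own threshold.
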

\begin{proof}
    The proof is of two parts.
    First, the goal is to compute the derivative of the conditional expectation of hires $E[Q_h] \equiv E[Q | Q^S > \tau^S, Q^H > \tau^H]$ with respect to $\theta$, and show that it decreases in the specified range.
    Second, we show that $E[Q_h]$ is globally maximized at $\theta = 0$
    
    \paragraph{Part 1: Derivative of $E[Q_h]$.} The goal of the first part is to show that the derivative of $E[Q_h]$ wrt $\theta$ is negative whenever $0 \leq \theta \leq \min{\{\frac{\theta^S}{\theta^H}, \frac{\theta^H}{\theta^S}\}}$.
    
    \begin{enumerate}[wide, labelwidth=!, labelindent=0pt, leftmargin=*]
    \item\textbf{Conditional Expectation of Truncated Multi-Normal Distribution.} \textcite{tallis1961moment} shows that the expected value of a truncated multi-normal distribution is given by:
        
        \begin{align}\label{equ:expected_quality_of_hires}
            E\bigl[Q \mid Q^H>\tau^H,Q^S>\tau^S\bigr]
            =
            \frac{
            \theta^H\,\ph(\tau^H)\,\bPh\!\Bigl(\frac{\tau^S-\theta\,\tau^H}{\sqrt{1-\theta^2}}\Bigr)
            \;+\;
            \theta^S\,\ph(\tau^S)\,\bPh\!\Bigl(\frac{\tau^H-\theta\,\tau^S}{\sqrt{1-\theta^2}}\Bigr)
            }{
            \bPh_2(\tau^H,\tau^S;\theta)
            }
        \end{align}
    By symmetry, we have:
    $$
    E[Q_h] = \frac{
        \theta^H \phi(\tau^H) \Phi\left( \frac{\theta \tau^H - \tau^S}{\sqrt{1-\theta^2}} \right) +
    \theta^S \phi(\tau^S) \Phi\left( \frac{\theta \tau^S - \tau^H}{\sqrt{1-\theta^2}} \right) 
    }{
    \Phi_2(-\tau^H, -\tau^S; \theta)
    }
    $$
    Let $N(\theta)$ be the numerator and $D(\theta)$ be the denominator:
    \begin{align}
    N(\theta) &= \theta^S \phi(\tau^S) \Phi\left( \frac{\theta \tau^S - \tau^H}{\sqrt{1-\theta^2}} \right) + \theta^H \phi(\tau^H) \Phi\left( \frac{\theta \tau^H - \tau^S}{\sqrt{1-\theta^2}} \right) \label{eq:N_theta} \\
    D(\theta) &= \Phi_2(-\tau^S, -\tau^H; \theta) \label{eq:D_theta}
    \end{align}
    We will use the quotient rule for differentiation:
    $$
    \frac{\partial E[Q_h]}{\partial \theta} = \frac{\frac{\partial N}{\partial \theta} D(\theta) - N(\theta) \frac{\partial D}{\partial \theta}}{[D(\theta)]^2}
    $$
    
    \item\textbf{Derivative of the Denominator $D(\theta)$.}
    Using Plackett's Identity:
    $$
    \frac{\partial D}{\partial \theta} = \frac{\partial}{\partial \theta} \Phi_2(-\tau^S, -\tau^H; \theta) = \phi_2(-\tau^S, -\tau^H; \theta)
    $$
    Using Identity 2:
    $$
    \frac{\partial D}{\partial \theta} = \phi_2(\tau^S, \tau^H; \theta)
    $$
    Let's denote this as $D'(\theta) = \phi_2(\tau^S, \tau^H; \theta)$.
    
    \item\textbf{Derivative of the Numerator $N(\theta)$.}
    Let the arguments of $\Phi(\cdot)$ in $N(\theta)$ be:
    $$ k_S(\theta) = \frac{\theta \tau^S - \tau^H}{\sqrt{1-\theta^2}} \quad \text{and} \quad k_H(\theta) = \frac{\theta \tau^H - \tau^S}{\sqrt{1-\theta^2}} $$
    We need the derivatives $\frac{dk_S}{d\theta}$ and $\frac{dk_H}{d\theta}$.
    For $k_S(\theta) = (\theta \tau^S - \tau^H)(1-\theta^2)^{-1/2}$:
    \begin{align*}
    \frac{dk_S}{d\theta} &= (\tau^S)(1-\theta^2)^{-1/2} + (\theta \tau^S - \tau^H) \left(-\frac{1}{2}\right)(1-\theta^2)^{-3/2}(-2\theta) \\
    &= \frac{\tau^S(1-\theta^2) + \theta(\theta \tau^S - \tau^H)}{(1-\theta^2)^{3/2}} \\
    &= \frac{\tau^S - \theta^2 \tau^S + \theta^2 \tau^S - \theta \tau^H}{(1-\theta^2)^{3/2}} = \frac{\tau^S - \theta \tau^H}{(1-\theta^2)^{3/2}}
    \end{align*}
    Similarly, for $k_H(\theta) = (\theta \tau^H - \tau^S)(1-\theta^2)^{-1/2}$ (by swapping $\tau^S \leftrightarrow \tau^H$ in the expression for $\frac{dk_S}{d\theta}$):
    $$
    \frac{dk_H}{d\theta} = \frac{\tau^H - \theta \tau^S}{(1-\theta^2)^{3/2}}
    $$
    Now, we differentiate $N(\theta)$ using the chain rule $\frac{d}{d\theta} \Phi(w(\theta)) = \phi(w(\theta)) \frac{dw}{d\theta}$:
    $$
    \frac{\partial N}{\partial \theta} = \theta^S \phi(\tau^S) \left[ \phi(k_S(\theta)) \frac{dk_S}{d\theta} \right] + \theta^H \phi(\tau^H) \left[ \phi(k_H(\theta)) \frac{dk_H}{d\theta} \right]
    $$
    Substitute $\frac{dk_S}{d\theta}$ and $\frac{dk_H}{d\theta}$:
    $$
    \frac{\partial N}{\partial \theta} = \theta^S \phi(\tau^S) \phi\left(\frac{\theta \tau^S - \tau^H}{\sqrt{1-\theta^2}}\right) \frac{\tau^S - \theta \tau^H}{(1-\theta^2)^{3/2}} + \theta^H \phi(\tau^H) \phi\left(\frac{\theta \tau^H - \tau^S}{\sqrt{1-\theta^2}}\right) \frac{\tau^H - \theta \tau^S}{(1-\theta^2)^{3/2}}
    $$
    We use Identity 3. Let $u(\theta) = \frac{\tau^H - \theta \tau^S}{\sqrt{1-\theta^2}}$, so $k_S(\theta) = -u(\theta)$.
    And let $v(\theta) = \frac{\tau^S - \theta \tau^H}{\sqrt{1-\theta^2}}$, so $k_H(\theta) = -v(\theta)$.
    By Identity 4, $\phi(k_S(\theta)) = \phi(-u(\theta)) = \phi(u(\theta))$ and $\phi(k_H(\theta)) = \phi(-v(\theta)) = \phi(v(\theta))$.
    So,
    \begin{align*}
    \phi(\tau^S) \phi(k_S(\theta)) &= \phi(\tau^S) \phi\left(\frac{\tau^H - \theta \tau^S}{\sqrt{1-\theta^2}}\right) = \sqrt{1-\theta^2} \phi_2(\tau^S, \tau^H; \theta) = \sqrt{1-\theta^2} D'(\theta) \\
    \phi(\tau^H) \phi(k_H(\theta)) &= \phi(\tau^H) \phi\left(\frac{\tau^S - \theta \tau^H}{\sqrt{1-\theta^2}}\right) = \sqrt{1-\theta^2} \phi_2(\tau^H, \tau^S; \theta) = \sqrt{1-\theta^2} D'(\theta)
    \end{align*}
    Substituting these into $\frac{\partial N}{\partial \theta}$:
    \begin{align*}
    \frac{\partial N}{\partial \theta} &= \theta^S \left(\sqrt{1-\theta^2} D'(\theta)\right) \frac{\tau^S - \theta \tau^H}{(1-\theta^2)^{3/2}} + \theta^H \left(\sqrt{1-\theta^2} D'(\theta)\right) \frac{\tau^H - \theta \tau^S}{(1-\theta^2)^{3/2}} \\
    &= \frac{D'(\theta)}{1-\theta^2} \left[ \theta^S (\tau^S - \theta \tau^H) + \theta^H (\tau^H - \theta \tau^S) \right] \\
    &= \frac{D'(\theta)}{1-\theta^2} \left[ \theta^S \tau^S - \theta^S \theta \tau^H + \theta^H \tau^H - \theta^H \theta \tau^S \right] \\
    &= \frac{D'(\theta)}{1-\theta^2} \left[ (\theta^S \tau^S + \theta^H \tau^H) - \theta (\theta^S \tau^H + \theta^H \tau^S) \right]
    \end{align*}
    Let this be $N'(\theta)$.
    
    \item\textbf{Assembling the Derivative.}
    Using the quotient rule formulation $\frac{\partial E}{\partial \theta} = \frac{N'(\theta)}{D(\theta)} - E \frac{D'(\theta)}{D(\theta)}$:
    \begin{align*}
    \frac{\partial E}{\partial \theta} &= \frac{1}{D(\theta)} \left( \frac{D'(\theta)}{1-\theta^2} \left[ (\theta^S \tau^S + \theta^H \tau^H) - \theta (\theta^S \tau^H + \theta^H \tau^S) \right] \right) - E \frac{D'(\theta)}{D(\theta)} \\
    &= \frac{D'(\theta)}{D(\theta)} \left\{ \frac{(\theta^S \tau^S + \theta^H \tau^H) - \theta (\theta^S \tau^H + \theta^H \tau^S)}{1-\theta^2} - E \right\}
    \end{align*}
    Substituting back the full forms for $D(\theta) = \Phi_2(-\tau^S, -\tau^H; \theta)$, $D'(\theta) = \phi_2(\tau^S, \tau^H; \theta)$:
    $$
    \frac{\partial E[Q_h]}{\partial \theta} = \frac{\phi_2(\tau^S, \tau^H; \theta)}{\Phi_2(-\tau^S, -\tau^H; \theta)} \left[
    \frac{(\theta^S \tau^S + \theta^H \tau^H) - \theta (\theta^S \tau^H + \theta^H \tau^S)}{1-\theta^2} - E[Q_h]
    \right]
    $$
    Note that the first term inside the bracket is the conditional expectation of $E[Q | Q^S = \tau^S, Q^H = \tau^H]$. Rewriting the formula, we get:
    \begin{align*}
        \boxed{
    \frac{\partial E[Q_h]}{\partial \theta} = \frac{\phi_2(\tau^S, \tau^H; \theta)}{\Phi_2(-\tau^S, -\tau^H; \theta)} \left[
    E[Q | Q^S = \tau^S, Q^H = \tau^H] - E[Q | Q^S > \tau^S, Q^H > \tau^H]
    \right]
    }
    \end{align*}
    
    \item\textbf{Sign of the Derivative.}
    Since $\phi_2 > 0$ and $\Phi_2 > 0$, the sign of the derivative is the same as the sign of $E[Q | Q^S = \tau^S, Q^H = \tau^H] - E[Q | Q^S > \tau^S, Q^H > \tau^H]$.
    The derivative is therefore negative whenever $E[Q | Q^S = \tau^S, Q^H = \tau^H] > E[Q | Q^S > \tau^S, Q^H > \tau^H]$ --- i.e., when the corner solution is lower than the average solution in the quadrant above the thresholds.
    
    The corner solution is guaranteed to be lower than the average whenever the conditional expected quality increases with thresholds:
    \begin{align*}
        \frac{\partial}{\partial \tau^S}E[Q | Q^S = \tau^S, Q^H = \tau^H] > 0 \\ 
        \frac{\partial}{\partial \tau^H}E[Q | Q^S = \tau^S, Q^H = \tau^H] > 0
    \end{align*}
    Substituting the full form of $E[Q | Q^S = \tau^S, Q^H = \tau^H]$:
    \begin{align*}
        \frac{\partial}{\partial \tau^S}E[Q | Q^S = \tau^S, Q^H = \tau^H] &= \frac{\partial}{\partial \tau^S} \frac{(\theta^S \tau^S + \theta^H \tau^H) - \theta (\theta^S \tau^H + \theta^H \tau^S)}{1-\theta^2} = \frac{\theta^S - \theta\theta^H}{1-\theta^2} > 0 \\
        \frac{\partial}{\partial \tau^H}E[Q | Q^S = \tau^S, Q^H = \tau^H] &= \frac{\partial}{\partial \tau^H} \frac{(\theta^S \tau^S + \theta^H \tau^H) - \theta (\theta^S \tau^H + \theta^H \tau^S)}{1-\theta^2} = \frac{\theta^H - \theta\theta^S}{1-\theta^2} > 0
    \end{align*}
    This gives a bound for when the derivative is guaranteed to be negative. 
    \begin{align}
        0 \leq \theta \leq \min{\{\frac{\theta^S}{\theta^H}, \frac{\theta^H}{\theta^S}\}}
    \end{align}
    Without loss of generality, we assume that $\theta^S \leq \theta^H$ (i.e., the screener has lower quality than the hiring manager in predicting overall quality), so the condition gets simplified to:
    \begin{align}\label{equ:theta_bound}
        0 \leq \theta \leq \frac{\theta^S}{\theta^H}
    \end{align}
    Intuitively, this is the setting where the screener's correlation to predict the hiring manager's preferences is lower than the screener's relative ability to predict the candidate's actual quality compared to the hiring manager's ability to do so. In other words, we want the screener to be better at predicting the true quality than it is at predicting the hiring manager's preferences.     
    This completes the first part of the proof.    
\end{enumerate}

\paragraph{Part 2: Global Maximum.} The goal of part 2 is to show that $E[Q | Q^S > \tau^S, Q^H > \tau^H]_{\theta=0} > E[Q | Q^S > \tau^S, Q^H > \tau^H]_{1 > \theta > 0}$.

    \begin{enumerate}[wide, labelwidth=!, labelindent=0pt, leftmargin=*]
        \item\textbf{Expected Quality of Hires.} Using \textcite{tallis1961moment} again, we have:
        \begin{align*}
            E\bigl[Q \mid Q^H>\tau^H,Q^S>\tau^S\bigr]
            =
            \frac{
            \theta^H\,\ph(\tau^H)\,\bPh\!\Bigl(\frac{\tau^S-\theta\,\tau^H}{\sqrt{1-\theta^2}}\Bigr)
            \;+\;
            \theta^S\,\ph(\tau^S)\,\bPh\!\Bigl(\frac{\tau^H-\theta\,\tau^S}{\sqrt{1-\theta^2}}\Bigr)
            }{
            \bPh_2(\tau^H,\tau^S;\theta)
            }
        \end{align*}
        \item \textbf{Case when $\theta=0$.}
        When $\theta=0$, $Q^H$ and $Q^S$ are independent. Then,
        \begin{align*}
            \bPh_2(\tau^H,\tau^S;0)
            = \bPh(\tau^H)\,\bPh(\tau^S)
        \end{align*}
        
        This simplifies the above expression to:
        \begin{align}
            E\bigl[q \mid h>\tau^H,s>\tau^S\bigr]_{\theta=0}
            =\frac{
            \theta^H\,\ph(\tau^H)\,\bPh(\tau^S)
            +\theta^S\,\ph(\tau^S)\,\bPh(\tau^H)
            }{
            \bPh(\tau^H)\,\bPh(\tau^S)
            } 
            =\theta^H\,\frac{\ph(\tau^H)}{\bPh(\tau^H)}
            +\theta^S\,\frac{\ph(\tau^S)}{\bPh(\tau^S)}
        \end{align}
        
        \item \textbf{Difference in expected quality \(\Delta(\theta)\).} Define the difference in expected quality as:
        \[
        \Delta(\theta)
        :=E[Q \mid Q^H>\tau^H,Q^S>\tau^S]_{\theta=0}
        \;-\;
        E[Q \mid Q^H>\tau^H,Q^S>\tau^S]_{0< \theta < 1}
        \]
        
        This can be rearranged as:
        \[
        \Delta(\theta)
        =\theta^H\,\ph(\tau^H)\,D_H(\theta)
        \;+\;
        \theta^S\,\ph(\tau^S)\,D_S(\theta),
        \]
        where
        \begin{equation}\label{equ:D_H}
        D_H(\theta)
        =\frac{1}{\bPh(\tau^H)}
        \;-\;
        \frac{
        \bPh\!\bigl(\tfrac{\tau^S-\theta\tau^H}{\sqrt{1-\theta^2}}\bigr)
        }{
        \bPh_2(\tau^H,\tau^S;\theta)
        },
        \end{equation}
        and $D_S(\theta)$ is the analogous term swapping $Q^H\leftrightarrow Q^S$.
        \[
        D_S(\theta)
        =\frac{1}{\bPh(\tau^S)}
        \;-\;
        \frac{
        \bPh\!\bigl(\tfrac{\tau^H-\theta\tau^S}{\sqrt{1-\theta^2}}\bigr)
        }{
        \bPh_2(\tau^H,\tau^S;\theta)
        },
        \]
        
        Because $\theta^H$, $\theta^S$, $\ph(\cdot)$ are all positive, the sign of $\Delta(\theta)$ depends on the sign of $D_H(\theta)$ and $D_S(\theta)$.
        Thus, to prove that $\Delta(\theta) > 0$ for all $\theta \in (0, 1)$, it is sufficient to show that $D_H(\theta) > 0$ and $D_S(\theta) > 0$ for all $\theta \in (0, 1)$.
        
        \item \textbf{Showing $D_H(\theta)>0$ and $D_S(\theta)>0$.} First consider $D_H(\theta)$. Multiplying both sides of \ref{equ:D_H} by $\bPh_2(\tau^H,\tau^S;\theta)$, we get:
        \[
        \bPh_2(\tau^H,\tau^S;\theta)\,D_H(\theta)
        =
        \frac{\bPh_2(\tau^H,\tau^S;\theta)}{\bPh(\tau^H)}
        -\bPh\!\Bigl(\tfrac{\tau^S-\theta\tau^H}{\sqrt{1-\theta^2}}\Bigr).
        \]
        
        Note that:
        \[
        \frac{\bPh_2(\tau^H,\tau^S;\theta)}{\bPh(\tau^H)}
        = P\bigl(Q^S>\tau^S \mid Q^H>\tau^H\bigr),
        \]
        
        For the second term, since in a bivariate normal distribution with correlation $\theta$, the conditional probability of $Q^S \mid Q^H=\tau^H$ is normal with mean $\theta\tau^H$ and variance $1-\theta^2$, we get:
        \[
        \bPh\!\Bigl(\tfrac{\tau^S-\theta\tau^H}{\sqrt{1-\theta^2}}\Bigr)
        = P\bigl(Q^S>\tau^S \mid Q^H=\tau^H\bigr).
        \]
        
        Therefore,  
        \[
            \bPh_2(\tau^H,\tau^S;\theta)\,D_H(\theta) = P\bigl(Q^S>\tau^S \mid Q^H>\tau^H\bigr) - P\bigl(Q^S>\tau^S \mid Q^H=\tau^H\bigr).
        \]
        
        Because $\bPh_2(\cdot)$ is positive, the sign of $D_H(\theta)$ is the same as the sign of the above difference.
        To show that $D_H(\theta)>0$, we need to show that:
        \begin{equation}\label{equ:D_H_inequality}
            P\bigl(Q^S>\tau^S \mid Q^H>\tau^H\bigr) > P\bigl(Q^S>\tau^S \mid Q^H=\tau^H\bigr).
        \end{equation}
        
        Next, define:
        \[
            g(q^H)=P\bigl(Q^S>\tau^S \mid Q^H=q^H\bigr)
            =\bPh\!\Bigl(\tfrac{\tau^S-\theta q^H}{\sqrt{1-\theta^2}}\Bigr),
        \]
        
        Note that the right-hand side term in \ref{equ:D_H_inequality} is the point probability that $Q^S>\tau^S$ given that $Q^H=\tau^H$.
        The left-hand side term is the \emph{average} probability that $Q^S>\tau^S$ given that $Q^H>\tau^H$ over the range of $\tau^H < Q^H < \infty$.
        We can rewrite \ref{equ:D_H_inequality} as:
        \[
            \mathbb{E}\bigl[g(q^H)\mid q^H>\tau^H\bigr] > g(\tau^H).
        \]
        Taking the derivative of $g(q^H)$ with respect to $q^H$, we get:
        \begin{equation}\label{equ:D_H_derivative}
            \frac{d}{d q^H}g(q^H) = \phi\left(\frac{\theta q^H - \tau^S}{\sqrt{1-\theta^2}}\right) \frac{\theta}{\sqrt{1-\theta^2}}
        \end{equation}
        The derivative is always positive since $\phi(\cdot) > 0$ and $\theta > 0$.
        Therefore, $g(q^H)$ is strictly increasing in $q^H$.
        
        Since $g(q^H)$ is strictly increasing, the expected value of $g(q^H)$ over the range of $\tau^H < q^H < \infty$ is greater than $g(\tau^H)$.
        Therefore, $D_H(\theta)>0$.
        A symmetric argument swapping $Q^H \leftrightarrow Q^S$ shows that $D_S(\theta)>0$.
        
        It follows that
    \(\Delta(\theta)>0\) for all $\theta\in(0,1)$.  
    Hence,
    \begin{equation}\label{equ:expected_quality_of_hires_no_constraint}
    E[Q\mid Q^H>\tau^H,Q^S>\tau^S]_{0}
    >
    E[Q\mid Q^H>\tau^H,Q^S>\tau^S]_{\theta}
        \quad
        \forall\,\theta\in(0,1),
        \end{equation}
    \end{enumerate}

    \paragraph{Equal Selection Constraint.} The above proofs do not explicitly consider the equal selection constraint, where the screening threshold $\tau^S$ differs for men ($\tau_m^S$) and women ($\tau_f^S$). 
    The overall expected quality is simply a weighted average based on the proportion $p_h(\theta)$ of each group in the hired pool:
    \[ E[Q_h] = p_h(\theta) \cdot E[Q_{h, \text{female}}] + (1-p_h(\theta)) E[Q_{h, \text{male}}] \] 
    Since~\ref{equ:theta_bound} and~\ref{equ:expected_quality_of_hires_no_constraint} apply to each subgroup (using their specific thresholds), it extends to the overall weighted average, completing the proof. 
\end{proof}

\subsection{Difference in quality between men and women}
\label{apx:model_diff_in_qual}
So far we have considered the case where male and female applicants are equally qualified.
We now consider the case where female applicants can be more/less qualified than men on average.
We parametrize this difference using the location parameter $\alpha$, as follows:

\begin{equation}
    \begin{aligned}
        (Q_m, Q_m^S, Q_m^H) \sim \mathcal{N} \left( \begin{bmatrix} 0 & 0 & 0 \end{bmatrix}, \begin{bmatrix}
                                                                                                     1        & \theta^S & \theta^H \\
                                                                                                     \theta^S & 1        & \theta   \\
                                                                                                     \theta^H & \theta   & 1
                                                                                                 \end{bmatrix} \right) \\
        (Q_f, Q_f^S, Q_f^H) \sim \mathcal{N} \left( \begin{bmatrix} \alpha & \alpha & \alpha \end{bmatrix}, \begin{bmatrix}
                                                                                                                    1        & \theta^S & \theta^H \\
                                                                                                                    \theta^S & 1        & \theta   \\
                                                                                                                    \theta^H & \theta   & 1
                                                                                                                \end{bmatrix} \right)
    \end{aligned}
\end{equation}

A positive $\alpha$ implies that women are more qualified on average than men, and a negative $\alpha$ implies the opposite.

The probability that a candidate is hired is given by:
\begin{align}
    Pr(\text{male is hired}) = \bPh_2(\tau^S_m, \tau^H; 0, \theta) \\
    Pr(\text{female is hired}) = \bPh_2(\tau^S_f, \tau^H; \alpha, \theta)
\end{align}
where $\bPh_2(\cdot;\alpha,\theta)$ is the bivariate tail CDF of $Q^S$ and $Q^H$ with mean $\alpha$ and correlation $\theta$.

The proportion of women in the hired pool is given by:
\begin{align}
    p_h = \frac{p_a \cdot Pr(\text{female is hired})}{p_a \cdot Pr(\text{female is hired}) + (1-p_a) \cdot Pr(\text{male is hired})}
\end{align}

We solve for this numerically, and plot the proportion of women in the hired pool $p_h$ as a function of $\alpha$ and $\theta$ in \Cref{fig:pr_fhire_vs_alpha}.

When women are more qualified than men ($\alpha > 0$), the equal selection constraint becomes redundant.
Higher mean quality compensates for the lower proportion of women in the applicant pool.
This implies that the proportion of women in the hired pool $p_h$ increases with $\alpha$.
Therefore, the equal selection constraint becomes redundant.

When women are less qualified than men ($\alpha < 0$), equal selection becomes even less effective.

\begin{figure}[H] 
    \caption[]{Female proportion of hires ($p_h$) vs. quality difference parameter ($\alpha$)}
    \centering
    \includegraphics[width=\textwidth, keepaspectratio]{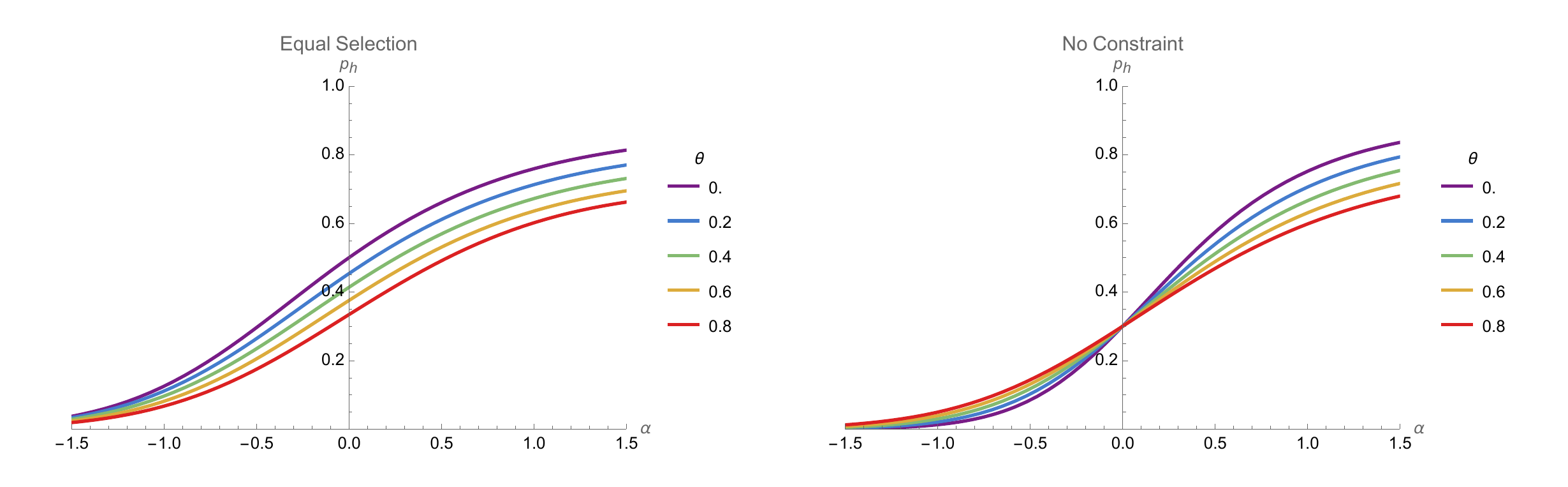}
    \label{fig:pr_fhire_vs_alpha}
    \vspace*{-0.2in}
    \begin{minipage}{1\textwidth}
        \begin{footnotesize}
            \begin{singlespace}
                \emph{Notes:}
                This figure plots the female proportion of hires, $p_h$, as a function of the quality difference parameter, $\alpha$. The proportion of women in the applicant pool is $p_a=0.3$.
                This result does not depend on the $\theta^S$ and $\theta^H$ parameters.
            \end{singlespace}
        \end{footnotesize}
    \end{minipage}
\end{figure}

Interestingly, without the equal selection constraint, the female proportion of hires decreases with $\theta$ when women have higher mean quality ($\alpha > 0$), and vice versa when women have lower mean quality.

\begin{figure}[H] 
    \caption[]{Female proportion of hires ($p_h$) vs. correlation parameter ($\theta$) for different $\alpha$ values}
    \centering
    \includegraphics[width=\textwidth, keepaspectratio]{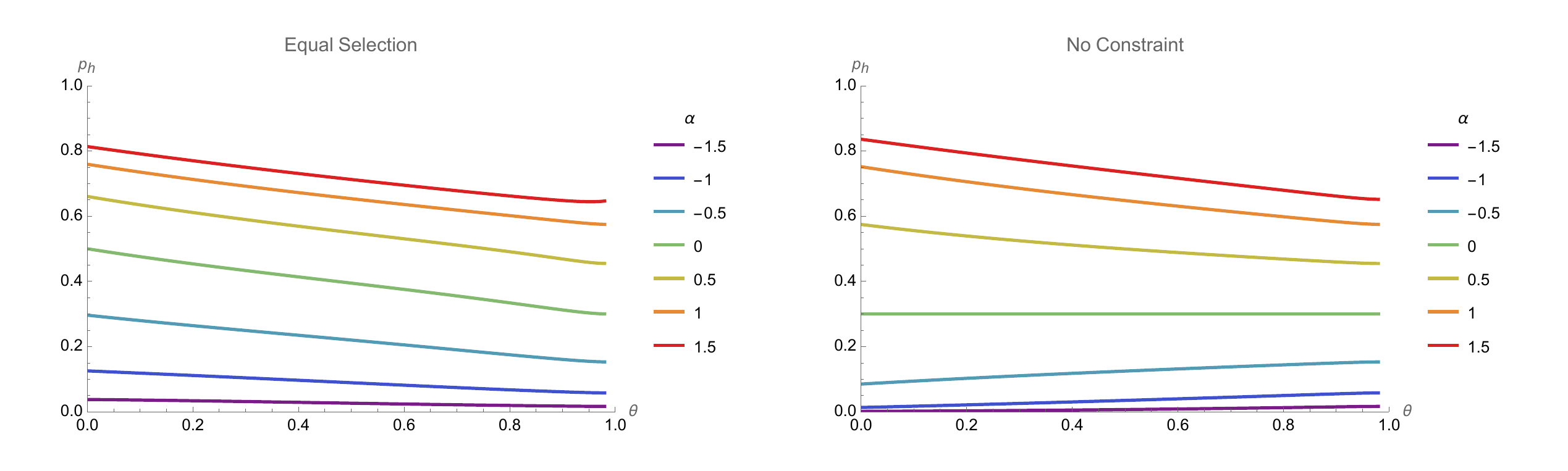}
    \label{fig:ph_vs_theta_alpha}
    \vspace*{-0.2in}
    \begin{minipage}{1\textwidth}
        \begin{footnotesize}
            \begin{singlespace}
                \emph{Notes:}
                This figure plots the female proportion of hires, $p_h$, as a function of the correlation parameter, $\theta$ for different values of $\alpha$. The proportion of women in the applicant pool is $p_a=0.3$.
                This result does not depend on the $\theta^S$ and $\theta^H$ parameters.
            \end{singlespace}
        \end{footnotesize}
    \end{minipage}
\end{figure}

    \section{Additional details on the ML models}
    \label{apx:clf_models}

    \subsection{Predictive performance}

    We measure the predictive performance of the ML models using the Area Under ROC curve (AUC) criteria\footnote{AUC is a widely-used measure for predictive performance for classification models since it is agnostic to both imbalanced classes and classification thresholds.
        The score ranges from 0.5 to 1, where 0.5 corresponds to a random classifier, and 1 corresponds to a perfect classifier.
    } on the hold-out test set and report the results in \Cref{tab:model_performance_gender}.

    For the screening model, the overall AUC score is 0.83, and there is no difference in AUC scores between the male and female candidates.
    We also find that there is some heterogeneity in performance across job types, as reported in \Cref{tab:model_performance_job_cat}.

    For the hiring manager model, the predictive performance is lower compared to the screening model since the hiring manager has more information from the interview, which we do not observe.
    Nonetheless, the predictive performance based on just resume characteristics is still reasonably high at 0.68, and there is no difference between genders.
    Note that the hiring manager model is evaluated on a subset of applicants in the hold-out test set who were, in fact, shortlisted.

    \begin{table}[H] 
        \centering
        \caption{Predictive model performance by gender on hold-out test set}
        \label{tab:model_performance_gender}
        \begin{adjustbox}{max width=\textwidth}
            \begin{threeparttable}
                \begin{tabular}{lrr|rr}
    \toprule
    {}      & \multicolumn{2}{c}{Screening} & \multicolumn{2}{c}{Hiring Manager}                  \\
    Group      & AUC                           & Support                       & AUC  & Support \\
    \midrule
    Female  & 0.83                          & 31,364                        & 0.68 & 4,679   \\
    Male    & 0.83                          & 42,393                        & 0.68 & 6,678   \\
    \midrule
    Overall & 0.83                          & 73,757                        & 0.68 & 11,357  \\
    \bottomrule
\end{tabular}

                \begin{tablenotes}[flushleft]
                    \footnotesize
                    \item \emph{Notes: This table reports the predictive performance of the screening and hiring manager classification models on the hold-out test set broken down by male and female candidates.
                    }
                \end{tablenotes}
            \end{threeparttable}
        \end{adjustbox}
    \end{table}

    \begin{table}[H] 
        \centering
        \caption{Predictive model performance by job category on hold-out test set}
        \label{tab:model_performance_job_cat}
        \begin{adjustbox}{max width=\textwidth}
            \begin{threeparttable}
                \begin{tabular}{lrr|rr}
    \toprule
    {}                                  & \multicolumn{2}{c}{Screening} & \multicolumn{2}{c}{Hiring Manager}                  \\
    Job Category                        & AUC                           & Support                       & AUC  & Support \\
    \midrule
    Legal \& PR                         & 0.86                          & 7,337                         & 0.65 & 926     \\
    Product \& Design                   & 0.85                          & 12,519                        & 0.66 & 1,863   \\
    Sales \& Marketing                  & 0.85                          & 15,169                        & 0.67 & 2,120   \\
    Other                               & 0.83                          & 214                           & 0.59 & 25      \\
    Engineering \& Technical            & 0.82                          & 16,506                        & 0.66 & 3,315   \\
    Finance \& Accounting               & 0.82                          & 7,026                         & 0.67 & 886     \\
    Biz Dev \& Operations               & 0.81                          & 4,836                         & 0.65 & 628     \\
    HR                                  & 0.79                          & 3,355                         & 0.69 & 452     \\
    Customer Service \& Acct Management & 0.78                          & 6,734                         & 0.7  & 1,112   \\
    \midrule
    Overall                             & 0.83                          & 73,757                        & 0.68 & 11,357  \\
    \bottomrule
\end{tabular}

                \begin{tablenotes}[flushleft]
                    \footnotesize
                    \item \emph{Notes: This table reports the predictive performance of the screening and hiring manager classification models on the hold-out test set. We estimate the metrics at the job posting level and aggregate up to the job category level.
                    }
                \end{tablenotes}
            \end{threeparttable}
        \end{adjustbox}
    \end{table}

    \begin{figure}[H]
        \centering
        \caption{ROC Curves for Screening and Hiring Manager Models}
        \begin{subfigure}[b]{0.48\textwidth}
            \caption{Screening Model ROC Curve}
            \centering
            \includegraphics[width=\textwidth, keepaspectratio]{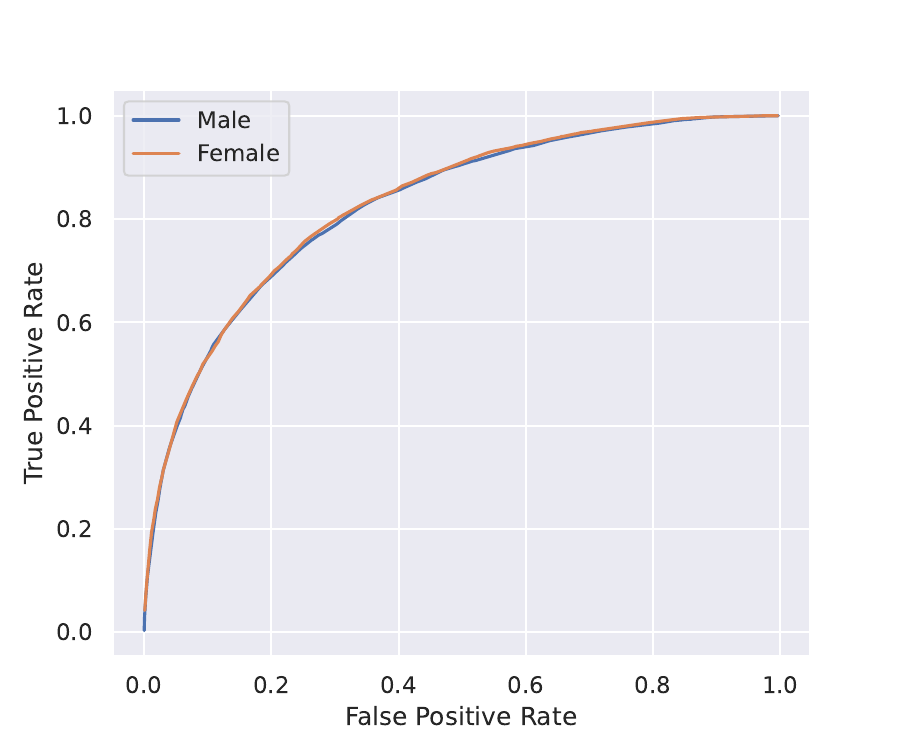}
            \label{fig:screening-roc}
        \end{subfigure}
        \hfill
        \begin{subfigure}[b]{0.48\textwidth}
            \caption{Hiring Manager Model ROC Curve}
            \centering
            \includegraphics[width=\textwidth, keepaspectratio]{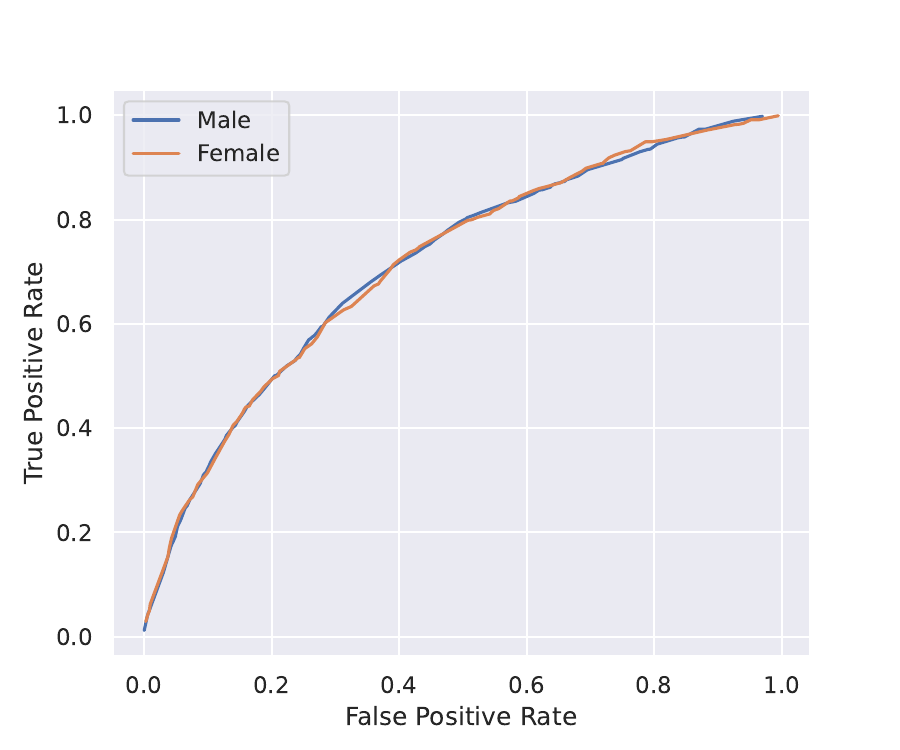}
            \label{fig:hiring-roc}
        \end{subfigure}
        \label{fig:roc-curves}
    \end{figure}

    \section{Additional empirical analyses}
    \label{apx:additional_analyses}

    \subsection{Measures of observable quality differences between men and women}
    \label{apx:emp_diff_in_quality}

    In this section we empirically assess differences in observable quality measures between men and women.
    To do so, we first identify four measures of observable quality: job-resume skill similarity, years of experience, attended a top 100 school, and educational attainment.
    We operationalize these measures as follows:

    \begin{itemize}
        \item \textbf{Job-Resume skill similarity:} We measure the average cosine similarity between skills listed in the job description and the applicant's resume. To get the cosine similarity, we first tokenize the job description and resume text. We then filter the tokens to extract only skills-related tokens (e.g., \texttt{python}, \texttt{data\_analysis}, \texttt{project\_management}) using a dictionary of skills\footnote{
                  This dictionary was created using the skills section of LinkedIn profiles in a separate analysis.}.
              We then get the vector representation of each skill token using a custom word2vec model trained on resumes, and take the average cosine similarity between the job description and resume skill vectors.
        \item \textbf{Years of experience:} We get the applicant's years of experience from the ATS.
        \item \textbf{Attended a top 100 school:} We create a binary variable indicating if the applicant attended a top 100 school based on the undergraduate institution listed in the resume. We use U.S. News and World Report's ranking of top 100 schools as the reference.
        \item \textbf{Educational attainment:} We create binary variables indicating if the applicant has a bachelor's, master's, or doctorate degree based on the highest degree listed in the resume.
    \end{itemize}

    For each of these outcomes, we estimate a linear regression model with job posting fixed effects

    $$y_{ij} = \beta_{\textit{Female}} \cdot \textit{Female}_{i} + \alpha_{j} + \epsilon_{ij}$$

    where $y_{ij}$ is the observable quality measure for applicant $i$ applying to job $j$, $\textit{Female}_{i}$ is a binary variable indicating if the applicant is female, $\alpha_{j}$ is the job posting fixed effect, and $\epsilon_{ij}$ is the error term.

    We report the coefficients and percentage differences below.
    Compared to male applicants, female applicants have roughly the same job-resume skill similarity, fewer years of experience, are more likely to have attended a top 100 school, more likely to have a bachelor's or master's degree, and less likely to have a doctorate degree.

    \begin{table}[H] 
        \centering
        \caption{Regression coefficients of observable quality measures}
        \begin{threeparttable}
            \begin{tabular}{lrrr}
                \toprule
                Variable                    & $\beta_{\textit{Female}}$ & \% Difference         & $p$-value           \\
                \midrule
                Job-Resume skill similarity & \jobResumeSimCoef{}       & \jobResumeSimDiff{}\% & \jobResumeSimPval{} \\
                Yrs exp                     & \yrsExpCoef{}             & \yrsExpDiff{}\%       & \yrsExpPval{}       \\
                Attended top 100 school     & \topschoolCoef{}          & \topschoolDiff{}\%    & \topschoolPval{}    \\
                Has bachelor's degree       & \bachelorsCoef{}          & \bachelorsDiff{}\%    & \bachelorsPval{}    \\
                Has master's degree         & \mastersCoef{}            & \mastersDiff{}\%      & \mastersPval{}      \\
                Has doctorate               & \doctorateCoef            & \doctorateDiff{}\%    & \doctoratePval{}    \\
                \bottomrule
            \end{tabular}
            \begin{tablenotes}[flushleft]
                \footnotesize
                \item \emph{Notes: This table shows the estimated regression coefficients and percentage differences of various observable quality measures. Job-Resume Similarity is the cosine similarity between the job description and the resume. Yrs Exp is the number of years of experience. Attended Top 100 School is a binary variable indicating if the candidate attended a top 100 school. Has Bachelor's Degree, Has Master's Degree, and Has Doctorate are binary variables indicating if the candidate has a bachelor's, master's, or doctorate degree, respectively.}
            \end{tablenotes}
        \end{threeparttable}
    \end{table}

    \subsection{Regression estimates on the likelihood of being shortlisted}
    \label{apx:callback_regression}

    \begin{table}[H] 
        \centering
        \caption{Likelihood of being shortlisted, OLS estimates}
        \begin{threeparttable}
            \begingroup
\centering
\begin{tabular}{lc}
   \tabularnewline \midrule \midrule
   Dependent Variable:   & Shortlisted (1=YES)\\  
   \midrule
   \emph{Variables}\\
   Male            & -0.0128$^{***}$\\   
                         & (0.0012)\\   
   Yrs Exp               & 0.0012$^{***}$\\   
                         & (0.0002)\\   
   Job Resume Similarity & 0.3918$^{***}$\\   
                         & (0.0107)\\   
   \midrule
   \emph{Fixed-effects}\\
   Job Posting           & Yes\\  
   School Rank           & Yes\\  
   Degree                & Yes\\  
   \midrule
   \emph{Fit statistics}\\
   Observations          & 595,246\\  
   \midrule \midrule
   \multicolumn{2}{l}{\emph{Clustered (Job Posting) standard-errors in parentheses}}\\
   \multicolumn{2}{l}{\emph{Signif. Codes: ***: 0.01, **: 0.05, *: 0.1}}\\
\end{tabular}
\par\endgroup

            \begin{tablenotes}[flushleft]
                \footnotesize
                \item \emph{Notes:} This table shows the OLS estimates of the likelihood of being shortlisted. Each observation corresponds to an application. Male applicants are more likely to be shortlisted than female applicants after controlling for job-resume skill similarity, years of experience, education, and job posting.
            \end{tablenotes}
        \end{threeparttable}
    \end{table}

    \subsection{Goodness of fit using different copulas}
    \label{apx:copula_gof}

    Below we provide Kolmogorov-Smirnov (KS) statistics of empirical quality scores $q^S$ and $q^H$ fit against commonly used copulas.
Lower KS statistics indicate a better fit.
The Gaussian copula has the 2nd best fit after the Frank copula.

\begin{table}[H] 
    \centering
    \caption{Copula goodness-of-fit measures}
    \begin{threeparttable}
        \begin{tabular}{lrr}           
    \toprule
    Copula   & KS-Statistic & $p$-value \\ 
    \midrule 
    Gaussian & 2.62         & $<0.0001$ \\
    Gumbel   & 6.25         & $<0.0001$ \\
    Frank    & 2.29         & $<0.0001$ \\
    Clayton  & 6.07         & $<0.0001$ \\
    Joe      & 11.84        & $<0.0001$ \\
    AMH      & 5.37         & $<0.0001$ \\
    \bottomrule
\end{tabular}

        \begin{tablenotes}[flushleft]
            \footnotesize
            \item \emph{Notes: This table shows the goodness-of-fit measures of empirical quality scores $q^S$ and $q^H$ fit against various copulas.}
        \end{tablenotes}
    \end{threeparttable}
\end{table}

\end{document}